\documentclass{article}



\usepackage[preprint,nonatbib]{neurips_2024}



\usepackage[utf8]{inputenc} 
\usepackage[T1]{fontenc}    
\usepackage{hyperref}       
\usepackage{url}            
\usepackage{booktabs}       
\usepackage{amsfonts}       
\usepackage{nicefrac}       
\usepackage{microtype}      
\usepackage[dvipsnames]{xcolor}         
\usepackage{amsmath}
\usepackage{amsthm}
\usepackage{bm}
\usepackage{enumitem}

\usepackage{pifont}
\usepackage{multirow}
\usepackage{multicol}
\usepackage{graphicx}
\usepackage{wrapfig}
\usepackage{bbm}
\usepackage{algorithm2e}

\newcommand{\method}{CVD\xspace}

\hypersetup{
colorlinks,
linkcolor=,
urlcolor=magenta,
citecolor=.
}



\title{Collaborative Video Diffusion: Consistent Multi-video Generation with Camera Control}
%

\author{Zhengfei Kuang*$^{1}$ \hspace{10mm} Shengqu Cai*$^{1}$ \hspace{10mm}  
Hao He$^{2}$ \hspace{10mm} 
Yinghao Xu$^{1}$ \hspace{10mm} \\
\textbf{Hongsheng Li}$^{2}$ \hspace{10mm} 
\textbf{Leonidas Guibas}$^{1}$ \hspace{10mm} \textbf{Gordon Wetzstein}$^{1}$ \hspace{10mm} \\ \\
$^{1}$Stanford Univerity \hspace{14mm} $^{2}$CUHK \\\\
}

\begin{document}

\maketitle


\begin{abstract}
Research on video generation has recently made tremendous progress, enabling high-quality videos to be generated from text prompts or images. Adding control to the video generation process is an important goal moving forward and recent approaches that condition video generation models on camera trajectories make strides towards it. Yet, it remains challenging to generate a video of the same scene from multiple different camera trajectories. Solutions to this multi-video generation problem could enable large-scale 3D scene generation with editable camera trajectories, among other applications. We introduce collaborative video diffusion (CVD) as an important step towards this vision. The \method framework includes a novel \emph{cross-video synchronization module} that promotes consistency between corresponding frames of the same video rendered from different camera poses using an epipolar attention mechanism. Trained on top of a state-of-the-art camera-control module for video generation, \method generates multiple videos rendered from different camera trajectories with significantly better consistency than baselines, as shown in extensive experiments. Project page: \url{https://collaborativevideodiffusion.github.io/}. 
%
\end{abstract}

\section{Introduction}
\label{sec:intro}





With the impressive progress of diffusion models~\cite{ho2020ddpm, song2020ddim, karras2022edm, rombach2022latentdiffusion, saharia2022imagen, ramesh2022dalle2}, video generation has significantly advanced~\cite{esser2023gen1, guo2023animatediff, hu2023animateanyone, blattmann2023svd, blattmann2023alignyourlatents, ge2023pyoco, yu2023video, ho2022video, li2023videogen}, with a transformative impact on digital content creation workflows. 
Recent models like SORA~\cite{brooks2024sora} exhibit the ability to generate long high-quality videos with complex dynamics. 
Yet, these methods typically leverage text or image inputs to control the generation process and lack precise control over content and motion, which is essential for practical applications. 
Prior efforts explore the use of other input modalities, such as flow, keypoints, and depths, and develop novel control modules to incorporate these conditions effectively, enabling precise guidance of the generated contents \cite{wang2023motionctrl, yin2023dragnuwa, zhao2023motiondirector, hu2023animateanyone,tseng2023poseguideddiffusion,cai2023diffdreamer}. 
Despite these advancements, these methods still fail to provide camera control to the video generation process.

%
Recent works have started to focus on camera control using various techniques, such as motion LoRAs~\cite{hu2021lora, guo2023animatediff} or scene flows~\cite{yin2023dragnuwa, zhao2023motiondirector}.
Some representative works such as MotionCtrl~\cite{wang2023motionctrl} and CameraCtrl ~\cite{he2024cameractrl}
offer more flexible camera control by conditioning the video generative models on a sequence of camera poses, showing the feasibility of freely controlling the camera movements of videos. 
However, these methods are limited to single-camera trajectories, leading to significant inconsistencies in content and dynamics when generating multiple videos of the same scene from different camera trajectories. 
%
Consistent multi-video generation with camera control is desirable in many downstream applications, such as large-scale 3D scene generation. 
%
%
Training video generation models for consistent videos with different camera trajectories, however, is very challenging, partly due to the lack of large-scale multi-view dynamic in-the-wild scene data.

In this paper, we introduce \method, a plug-and-play module capable of generating videos with different camera trajectories sharing the same underlying content and motion of a scene. 
\method is designed on a collaborative diffusion process that generates consistent pairs of videos with individually controllable camera trajectories.
%
Consistency between corresponding frames of a video is enabled using epipolar attention, introduced by a learnable \textit{cross-view synchronization module}. 
%
To effectively train this module, we propose a new pseudo-epipolar line sampling scheme to enrich the epipolar geometry attention.
Due to the shortage of large-scale training data for 3D dynamic scenes, we propose a \textit{hybrid training} scheme where multi-view static data from RealEstate10k~\cite{zhou2018realestate10k} and monocular dynamic data from WebVid10M~\cite{bain2021webvid} are utilized to learn camera control and motion, respectively. 
To our knowledge, \method{} is the first approach to generate multiple videos with consistent content and dynamics while providing camera control. 
Through extensive experiments, we demonstrate that \method{} ensures strong geometric and semantic consistencies, significantly outperforming relevant baselines. 
%
We summarize our contributions as follows:
\begin{itemize}
    \item To our knowledge, our CVD is the first video diffusion model that generates multi-view consistent videos with camera control;  
    \item We introduce a novel module called the \textit{Cross-Video Synchronization Module}, designed to align features across diverse input videos for enhanced consistency;
    \item We propose a new collaborative inference algorithm to extend our video model trained on video pairs to arbitrary numbers of video generation;
    \item Our model demonstrates superior performance in generating multi-view videos with consistent content and motion, surpassing all baseline methods by a significant margin.
\end{itemize}

\section{Related Work}
\label{sec:related}
\paragraph{Video Diffusion Models.}
Recent efforts in training large-scale video diffusion models have enabled high-quality video generation~\cite{ge2023pyoco, blattmann2023alignyourlatents, ho2022video, ho2022imagenvideo, guo2023animatediff, brooks2024sora, esser2023gen1, singer2022makeavideo}.
Video Diffusion Model~\cite{ho2022video}, utilizes a 3D UNet to learn from images and videos jointly. 
With the promising image quality obtained by text-to-image (T2I) generation models, such as StableDiffusion~\cite{rombach2022latentdiffusion}, many recent efforts focus on extending pretrained T2I models by learning a temporal module. 
Align-your-latents~\cite{blattmann2023alignyourlatents} proposes to inflate the T2I model with 3D convolutions and factorized space-temporal blocks to learn video dynamics. 
Similarly, AnimateDiff~\cite{guo2023animatediff} builds upon StableDiffusion~\cite{rombach2022latentdiffusion}, adding a temporal module after each fixed spatial layer to achieve plug-and-play capabilities that allow users to perform personalized animation without any finetuning. 
Pyoco~\cite{ge2023pyoco} proposes a temporally coherent noise strategy to effectively model temporal dynamics. 
More recently, SORA~\cite{brooks2024sora} shows a great step towards photo-realistic long video generation by utilizing space-time diffusion with a transformer architecture.

\paragraph{Controllable Video Generation.}
The ambiguity of textual conditions often results in weak control for text-to-video models (T2V).
To provide precise guidance, some approaches utilize additional conditioning signals such as depth, skeleton, and flow to control the generated videos~\cite{esser2023gen1, wang2024videocomposer, hu2023animateanyone}.
Recent efforts like SparseCtrl~\cite{zhang2023controlnet} and SVD incorporate images as control signals for video generation.
%
%
%
To further control motions and camera views in the output video, 
DragNUWA~\cite{yin2023dragnuwa} and MotionCtrl~\cite{wang2023motionctrl} inject motion and camera trajectories into the conditioning branch, where the former uses a relaxed version of optical flow as stroke-like interactive instruction, and the later directly concatenate camera parameters as additional features. 
CameraCtrl~\cite{he2024cameractrl} proposes to over-parameterize the camera parameters using Pl\"ucker Embeddings~\cite{plucker1828pluckerembedding} and achieves more accurate camera conditioning. 
Alternatively, AnimateDiff~\cite{guo2023animatediff} trains camera-trajectory LoRAs~\cite{hu2021lora} to achieve viewpoint movement conditioning, while MotionDirector~\cite{zhao2023motiondirector} also utilizes LoRAs~\cite{hu2021lora} but to overfit to specific appearances and motions to gain their decoupling.

\paragraph{Multi-View Image Generation.}
Due to the lack of high-quality scene-level 3D datasets, a line of research focuses on generating coherent multi-view images.
Zero123~\cite{liu2023zero1to3} learns to generate novel-view images from pose conditions, and subsequent works extend it to multi-view diffusion~\cite{chan2023genvs, liu2023syncdreamer, long2023wonder3d, shi2023zero123plus, tang2023makeit3d, tewari2023forwarddiffusion, xu2023dmv3d, instant3d2023} for better view consistency.
However, these methods are only restricted to objects and consistently fail to generate high-quality large-scale 3D scenes.
MultiDiffusion~\cite{bar2023multidiffusion}
and DiffCollage~\cite{zhange2023diffcollage} facilitates 360-degree scene image generation, while SceneScape~\cite{fridman2023scenescape} generates zooming-out views by warping and inpainting using diffusion models. 
Similarly, Text2Room~\cite{hoellein2023text2room} generates multi-view images of a room, where the images can be projected via depths to get a coherent room mesh. 
DiffDreamer~\cite{cai2023diffdreamer} follows the setups in Infinite-Nature~\cite{liu2020infnat, li2022infnat_zero} and iteratively performs projection and refinement using a conditional diffusion model. A recent work, PoseGuided-Diffusion~\cite{tseng2023poseguideddiffusion}, performs novel view synthesis from a single image by training and adding an epipolar line bias to its attention masks on multi-view datasets with camera poses provided~(RealEstate10k~\cite{zhou2018realestate10k}). However, this method by construction does not generalize to in-the-wild or dynamic scenes, as its prior is solely learned from well-defined static indoor data.

A comprehensive survey of recent advances in diffusion models for visual computing is provided by Po et al.~\cite{po2023state}.

\begin{figure}[t]
    \centering
    \includegraphics[width=0.99\linewidth]{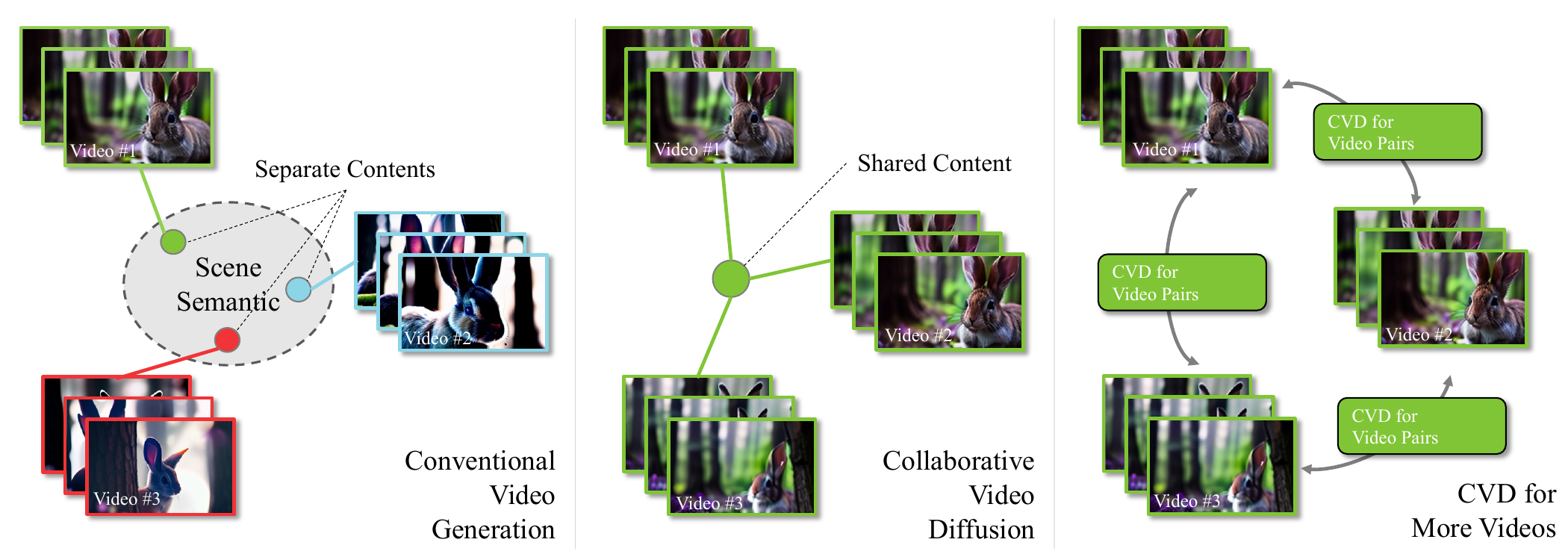}
    \caption{\textbf{An illustration of pairwise collaborative video generation}. Existing video diffusion models generate videos separately, which may result in inconsistent frame contents (e.g., geometries, objects, motions) across videos (\textit{Left}); Collaborative video generation aims to produce videos sharing the same underlying content (\textit{Middle}); In this work, we train our model on video pair datasets, and extend it to generate more collaborative videos (\textit{Right}).  }
    \label{fig:videoconsensus}
\end{figure}

\section{Collaborative Video Generation}
\label{sec:method}
Conventionally, video diffusion models (VDMs) aim to generate videos from randomly sampled Gaussian noise with multiple denoising steps, given conditions such as text prompts, frames, or camera poses. 
Specifically, let $\bm{v}_0\sim q_{\text{data}}(\bm{v})$ be a data point sampled from the data distribution; the forward diffusion process continuously adds noises to $\bm{v}_0$ to get a series of $\bm{v}_t, t\in1,...,T$ until it becomes Gaussian noise. Using the reparameterization trick from Ho et al.\cite{ho2020ddpm}, the distribution of $\bm{v}_t$ can be represented as $q(\bm{v}_t\ | \ \bm{v}_0) = \mathcal{N}(\bm{v}_t; \ \sqrt{\bar\alpha_t}\bm{v}_0, (1-\bar\alpha_t) I)$, where $\bar\alpha_t \in (0, 1]$ are the noise scheduling parameters, which are monotonously increasing, and $\bar\alpha_T=1$. The video diffusion model, typically denoted as $p_\theta(\bm{v}_{t-1} | \bm{v}_{t})$, is a model parameterized by $\theta$ that is trained to estimate the backward distribution $q(\bm{v}_{t-1} | \bm{v}_{t}, \bm{v}_{0})$. According to Ho et al.~\cite{ho2020ddpm}, the optimization of $p_\theta(\bm{v}_{t-1} | \bm{v}_{t})$ results in minimizing the following loss function:
\begin{equation}
    \mathcal{L} = \mathbb{E}_{\bm{\epsilon}, \bm{v}_0, t, c} \lVert \bm{\epsilon} - \bm{\epsilon}_\theta(\bm{v}_t, t, c)\rVert^2 ,
\end{equation}
where $\bm{v}_t = \sqrt{\bar \alpha_t} \bm{v}_0+(1-\bar \alpha_t)\bm{\epsilon}$ is the noisy video feature generated from $\bm{v}_0$ and a random sampled Gaussian noise $\bm{\epsilon}$, $\bm{\epsilon}_\theta(\bm{v}_t, t)$ is the noise prediction of the VDM, and $c$ is the video condition. During inference time, one can start from a normalized Gaussian noise $\bm{v}_T\sim \mathcal{N}(0, I)$ and apply the noise prediction model $\bm{\epsilon}_\theta(\bm{v}_t, t)$ multiple times to denoise it until $\bm{v}_0$.

Empowered by readily available large-scale video datasets, many state-of-the-art VDMs have successfully shown ability to produce temporally consistent and realistic videos~\cite{ho2022video, blattmann2023svd, brooks2024sora, guo2023animatediff, hu2023animateanyone, ho2022imagenvideo, esser2023gen1, blattmann2023alignyourlatents, ge2023pyoco}. However, one of the key drawbacks of all these existing methods is the inability to generate consistently coherent multi-view videos. As Fig.~\ref{fig:videoconsensus} shows, videos generated from a VDM under the same textual conditions exhibit content and spatial arrangement disparities. 
One can use inference-stage tricks, such as extended attention~\cite{cai2023genren}, to increase the semantic similarities between the videos, yet this does not address the problem of structure consistency. To address this issue, we introduce a novel objective for VDMs to generate multiple structurally consistent videos simultaneously given certain semantic conditions and dub it \textit{Collaborative Video Diffusion} (CVD). 

In contrast to conventional video diffusion models, \method{} seeks to find an arbitrary number of videos $\bm{v}^i, i\in 1,...,M$ that comply with the unknown data distribution $q_\text{data}(\bm{v}^{1,...,M})$ given separate conditions $c^{1,...,M}$. Similarly, the CVD model can be represented as $p_\theta(\bm{v}^{1,...,M} | c^{1,...,M})$. An example includes multi-view videos synchronously captured from the same dynamic 3D scene. Similarly, the loss function for a collaborative video diffusion model is defined as:
\begin{equation}
    \mathcal{L}_{\text{\method{}}} = \mathbb{E}_{\bm{\epsilon}^{1,...,M}, \bm{v}^{1,...,M}_0, t, c} \lVert \bm{\epsilon}^{1,...,M} - \bm{\epsilon}_\theta(\bm{v}^{1,...,M}_t, t, c^{1,...,M})\rVert^2.
\end{equation}
In practice, however, the scarcity of large-scale multi-view video data prevents us from directly training a model for an arbitrary quantity of videos. Therefore, we build our training dataset of consistent video pairs (i.e., $M=2$) from existing monocular video datasets, and train the diffusion model to generate pairs of videos sharing the same underlying contents and motions (see details in Secs.~\ref{sec:cross_view_module} and~\ref{sec:hybrid_training}). Our model is designed to accommodate any number of input video features, however, and we develop an inference algorithm to generate an arbitrary number of videos from our pre-trained pairwise CVD model (see Sec.~\ref{sec:multiview}).

\section{Collaborative Video Diffusion with Camera Control}
\begin{figure}[t]
  \centering
  \includegraphics[width=\textwidth]{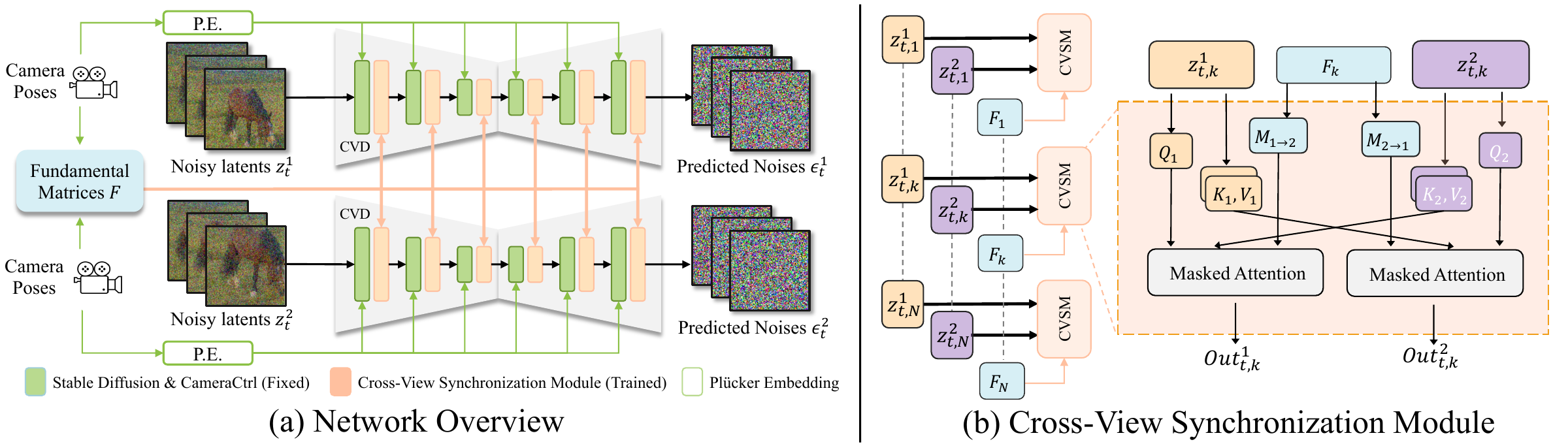}
  \caption{\textbf{Architecture of collaborative video diffusion}. \textit{Left}: The model takes two (or more) noisy video features and camera trajectories as input and generates the noise prediction for both videos. Note that the image autoencoder of Stable Diffusion is omitted here; \textit{Right}: Our \textit{Cross-View Synchronization Module} takes the same frames from the two videos along with the corresponding fundamental matrix as input, and applies a masked cross-view attention between the frames.}
  \label{fig:epictrl}
\end{figure}

We seek to build a diffusion model that takes a text prompt $y$ and a set of camera trajectories $cam^{1,...,M}$ and generates the same number of collaborative videos $\bm{v}^{1,...,M}$. 
To ease the generation of consistent videos, in this work we train our model with video pairs ($M=2$), we make the assumption that the videos are synchronized (i.e., corresponding frames are captured simultaneously), and set the first pose of every trajectory to be identical, forcing the first frame of all videos to be the same.

Inspired by ~\cite{he2024cameractrl, guo2023animatediff}, our model is designed as an extension of the camera-controlled video model CameraCtrl~\cite{he2024cameractrl}. As shown in Fig.~\ref{fig:epictrl}, our model takes two (or more) noisy video feature inputs and generates the noise prediction in a single pass. The video features pass through the pretrained weights of CameraCtrl and are synchronized in our proposed \textit{Cross-View Synchronization Modules} (Sec.~\ref{sec:cross_view_module}). The model is trained with two different datasets: RealEstate10K~\cite{zhou2018realestate10k}, which consists of camera-calibrated video on mostly static scenes, and WebVid10M~\cite{bain2021webvid}, which contains generic videos without poses. This leads to our two-phase training strategy introduced in Sec.~\ref{sec:hybrid_training}. The learned model can infer arbitrary numbers of videos using our proposed inference algorithm, which will be described in Sec.~\ref{sec:multiview}.

\subsection{Cross-View Synchronization Module}
\label{sec:cross_view_module}
State-of-the-art VDMs commonly incorporate various types of attention mechanisms defined on the spatial and temporal dimension: works such as AnimateDiff~\cite{guo2023animatediff}, SVD~\cite{blattmann2023svd}, LVDM~\cite{he2022lvdm} disentangles space and time and applies separate attention layers; the very recent breakthrough SORA~\cite{brooks2024sora} processes both dimensions jointly on its 3D spatial-temporal attention modules. Whilst the operations defined on the spatial and temporal dimensions bring a strong correlation between different pixels of different frames, capturing the context between different videos requires a new operation: cross-video attention. Thankfully, prior works~\cite{ceylan2023pix2video, cai2023genren} have shown that the extended attention technique, i.e., concatenating the key and values from different views together, is evidently efficient for preserving identical semantic information across videos. However, it refrains from preserving the structure consistency among them, leading to totally different scenes in terms of geometry. Thus, inspired by \cite{tseng2023poseguideddiffusion}, we introduce the \textit{Cross-View Synchronization Module} based on the epipolar geometry to shed light on the structure relationship between cross-video frames during the generation process, aligning the videos towards the same geometry. 

Fig.~\ref{fig:epictrl} demonstrates the design of our cross-view module for two videos. Taking a pair of feature sequences $\mathbf{z}^1_{1,..,N}, \mathbf{z}^2_{1,..,N}$ of $N$ frames as input, our module applies a cross-video attention between the same frames from the two videos. Specifically, we define our module as:
\begin{align}
    & out^1_k = \texttt{ff}(\texttt{Attn}(\mathbf{W}_Q \mathbf{z}^1_k, \mathbf{W}_K \mathbf{z}^2_k, \mathbf{W}_V \mathbf{z}^2_k, \mathcal{M}^{1,2}_k)), \quad \forall k\in1,...,N, \\
    & \mathcal{M}^{1,2}_k(\mathbf{x}_i, \mathbf{x}_2) = 
    \bm{1}(\mathbf{x}_2^T\mathbf{F}_k^{1\rightarrow 2}\mathbf{x}_1<\tau_{\text{epi}})
\end{align}
where $k$ is the frame index, $\mathbf{W}_Q, \mathbf{W}_K, \mathbf{W}_V$ are the query, key, value mapping matrices, $\mathcal{M}$ is the attention mask, $\texttt{Attn}(Q, K, V, \mathcal{M})$ is the attention operator introduced from the Transformer~\cite{vaswani2017attention}, $\texttt{ff}$ is the feed-forward function and $\mathbf{F}_k^{1\rightarrow 2}$ is the fundamental matrix between $cam^1_{k}$ and $cam^2_{k}$. $\tau_{\text{epi}}$ is set to $3$ in all of our experiments. The outputs of these modules are used as residual connections with corresponding original inputs to ensure no loss of originally learned signals.
The key insight of this module is as the two videos are assumed to be synchronized to each other, the same frame from the two videos is supposed to share the same underlying geometry and hence can be correlated by their epipolar geometry defined by the given camera poses. For the first frames where the camera poses are set to be identical since the fundamental matrix is undefined here, we generate pseudo epipolar lines for each pixel with random slopes that go through the pixels themselves. In the scenario where multi-view datasets are available, the modules can be further adapted to more videos by extending the cross-view attention from 1-to-1 to 1-to-many. Our study shows that epipolar-based attention remarkably increases the geometry integrity of the generated video pairs.

\subsection{Hybrid Training Strategy from Two Datasets}
\begin{wrapfigure}{r}{0.5\textwidth}
    \centering
    \includegraphics[width=\linewidth]{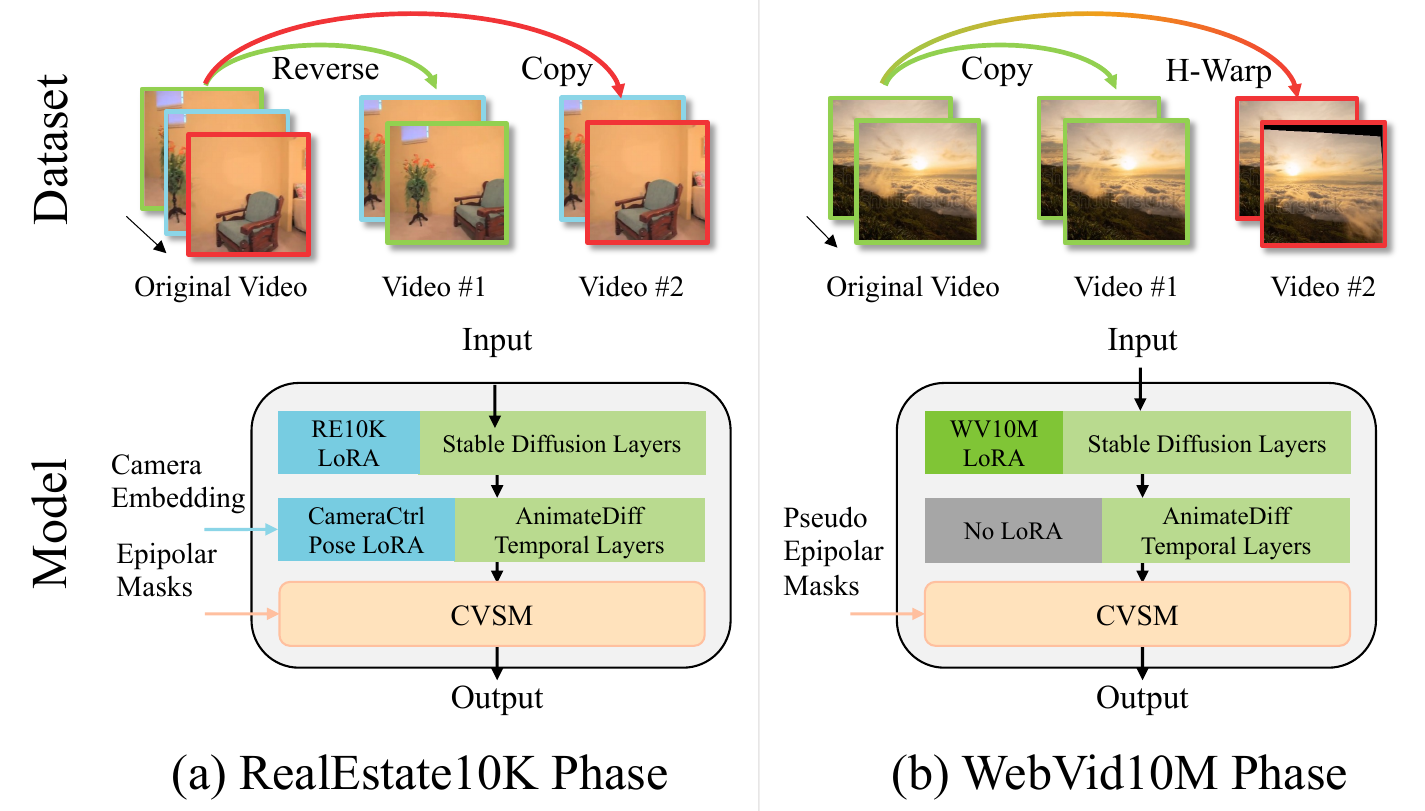}
    \caption{\textbf{Two-Phase Hybrid Training.} We use different data processing schemes to handle the two datasets (\textit{Top}) and apply separate model structures to train in corresponding phases (\textit{Bottom}).}
    \label{fig:hybridtraining}
    \vspace{-0.5cm}
\end{wrapfigure}
\label{sec:hybrid_training}
Considering the fact that there is no available large-scale real-world dataset for video pairs, we opt to make use of the two popular monocular datasets, RealEstate10K~\cite{zhou2018realestate10k} and WebVid10M~\cite{bain2021webvid}, to develop a hybrid training strategy for video pair generation models. 

\paragraph{RealEstate10K with Video Folding.}
The first phase of the training involves RealEstate10K~\cite{zhou2018realestate10k}, a dataset consisting of video clips capturing mostly static indoor scenes and corresponding camera poses. We sample video pairs by simply sampling subsequences of $2N-1$ frames from a video in the dataset, then cutting them from the middle and reversing their first parts to form synchronized video pairs. In other words, the subsequences are folded into two video clips sharing the same starting frame.

\paragraph{WebVid10M with Homography Augmentation.}
While RealEstate10K~\cite{zhou2018realestate10k} provides a decent geometry prior, training our model only on this dataset is not ideal since it does not provide any knowledge regarding dynamics and only contains indoor scenes. 
On the other hand, WebVid10M, a large-scale video dataset, consists of all kinds of videos and can be used as a good supplement to RealEstate10K. To extract video pairs, we clone the videos in the dataset and then apply random homography transformations to the clones. Nonetheless, The WebVid10M dataset contains no camera information, making it unsuitable for camera-conditioned model training. To address this problem, we propose a two-phase training strategy to adapt both datasets (with or without camera poses) for the same model.

\paragraph{Two-Phase Training.}
As previously mentioned, our model is built upon the existing camera-controlled VDM CameraCtrl~\cite{he2024cameractrl}. It is an extended version of AnimateDiff~\cite{guo2023animatediff} that adds a pose encoder and several pose feature injectors for the temporal attention layers to the original model. Both AnimateDiff~\cite{guo2023animatediff} and CameraCtrl~\cite{he2024cameractrl} are based on Stable Diffusion \cite{rombach2022latentdiffusion}.
This implies that they incorporate the same latent space domain, and thus, it is feasible to train a module that can be universally adapted. Therefore, as Fig.~\ref{fig:hybridtraining} shows, our training scheme is designed as follows. For the RealEstate10K dataset, we use CameraCtrl with LoRA fine-tuned on RealEstate10K as the backbone and applying the ground truth epipolar geometry in the cross-video module. For the WebVid10M dataset, we use AnimateDiff with LoRA fine-tuned on WebVid10M as the backbone, and apply the pseudo epipolar geometry (the same strategy used for the first frames in RealEstate10K dataset) in the cross-video module. Experiments show that the hybrid training strategy greatly helps the model generate videos with synchronized motions and great geometry consistency.

\subsection{Towards More Videos}
\label{sec:multiview}
With the CVD trained on video pairs, we can generate multiple videos that share consistent content and motions. At each denoising step $t$, we select $P$ feature pairs $\mathcal{P}=\{\bm{v}^{i_1,j_1}_t, \bm{v}^{i_2,j_2}_t, ..., \bm{v}^{i_P,j_P}_t \ | \ i_{1,...,P},j_{1,...,P}\in 1,...,M\}$ among all $M$ video features. We then use the trained network to predict the noise of each feature pair, and averaging them w.r.t. each video feature. That is, the output noise for the $i$th video feature is defined as: $\bm{\epsilon}_{out}(\bm{v}^i_t) = \text{Avg}_{\bm{v}^{i,j} \in \mathcal{P}} (\bm{\epsilon}^i_\theta(\bm{v}^{i,j}_t, t, cam^{i,j}))$, where $\bm{\epsilon}^i_\theta(\bm{v}^{i,j}_t, t, cam^{i,j})$ is the noise prediction for $\bm{v}^i_t$ given the video pair input $\bm{v}^{i,j}_t$.
For pair selection, we propose the following strategies:
\begin{itemize}
\item \textit{Exhaustive Strategy}: Select all $M(M-1)/2$ pairs. 
\item \textit{Partitioning Strategy}: Randomly divide $M$ noisy video inputs into $\frac{M}{2}$ pairs. 
\item \textit{Multi-Partitioning Strategy}: Repeat the Partitioning Strategy multiple times and combine all selected pairs.
\end{itemize}
The exhaustive strategy has a higher computational complexity of $O(M^2)$ compared to the partitioning one ($O(M)$) but covers every pair among $M$ videos and thus can produce more consistent videos. The multi-partitioning strategy, on the other hand, is a trade-off between the two strategies. We also embrace the recurrent denoising method introduced by Bansal et al.~\cite{bansal2023universal} that does multiple recurrent iterations on each denoising timestep. 
We provide the pseudo-code of our inference algorithm and detailed mathematical analysis in our supplementary.

\section{Experiments}
\label{sec:experiments}
\subsection{Quantitative Results}\label{sec:quantitative_results}
We compare our model with two state-of-the-art camera-controlled video diffusion models for quantitative evaluation: CameraCtrl~\cite{he2024cameractrl} and MotionCtrl~\cite{wang2023motionctrl}. Both of the two baselines are trained on the RealEstate10K~\cite{zhou2018realestate10k} for camera-controlled video generation. We conduct the following experiments to test the geometric consistency, semantic consistency, and video fidelity of all models:
\begin{table}[h]
    \centering
    \caption{\textbf{Quantitative Results on Geometry Consistency.} Following SuperGlue~\cite{sarlin20superglue}, we report the area under the cumulative error curve (AUC) of the predicted camera rotation and translation under certain thresholds ($5^\circ, 10^\circ, 20^\circ$), and the precision (P) and matching score (MS) of the SuperGlue correspondences. We feed the models with prompts from RealEstate10K~\cite{zhou2018realestate10k} (RE10K) and WebVid10M~\cite{bain2021webvid} (WV10M) in two experiments separately. For RealEstate10K scenes, we also run SuperGlue on the original RealEstate10K~\cite{zhou2018realestate10k} frames as reference. Our model achieves the highest scores on all metrics compared to baselines.}
    {\small
    \begin{tabular}{cc|c|c|cc}
         \hline
          \multirow{2}{*}{Scenes} & \multirow{2}{*}{Methods}  & Rot. AUC $\uparrow$  & Trans. AUC $\uparrow$ & \multirow{2}{*}{Prec.  $\uparrow$} & \multirow{2}{*}{M-S.  $\uparrow$}   \\
          && (@$5^\circ/10^\circ/20^\circ$) & (@$5^\circ/10^\circ/20^\circ$) & & \\
         \hline      
         \multirow{4}{*}{RE10K} & Reference & 61.4 /  77.2 / 87.8 & 6.9 / 17.5 / 41.0 & 60.2 & 36.5 \\
        & CameraCtrl~\cite{he2024cameractrl} & 34.8 / 55.2 / 72.4 & 2.3 / 6.6  / 17.0 & 50.8 & 27.3 \\         
                                        
         & MotionCtrl~\cite{wang2023motionctrl} & 49.0 / 68.0 / 81.2 & 3.4 /  10.2 / 25.0 & 64.6 & 38.9 \\      
         & Ours & \bf{55.5} / \bf{71.8} / \bf{83.3} & \bf{5.6} / \bf{15.9} / \bf{33.2} & \bf{76.9} & \bf{42.3} \\

         \hline 
         \multirow{3}{*}{WV10M} & CameraCtrl~\cite{he2024cameractrl}+SparseCtrl~\cite{guo2023sparsectrl}  & 6.2 / 14.3 / 25.8 & 0.5 / 1.7 / 4.7 & 16.5 & 5.4  \\
         & MotionCtrl~\cite{wang2023motionctrl}+SVD~\cite{blattmann2023svd} & 12.2 / 28.2 / 48.0 & 1.2 / 4.9 / 13.5 & 23.5 & 12.8 \\
         & Ours & \bf{25.2} / \bf{40.7} / \bf{57.5} & \bf{3.7} / \bf{9.6} / \bf{19.9} & \bf{51.0} & \bf{23.5} \\    
         
                                        

         \hline
    \end{tabular}}
    \label{tab:geometry_evaluation}
\end{table}

\begin{table}[h]
    \centering
    \caption{\textbf{Quantitative Results for semantic \& fidelity metrics.} The semantic metrics are evaluated on WebVid10M~\cite{bain2021webvid} and the fidelity metrics are performed on RealEstate10k~\cite{zhou2018realestate10k}. As shown in the table, our method is better than or on par with all prior work regarding semantic matching with the prompt, cross-video consistency, and frame fidelity.}
    \label{tab:semantic_quantitative}
    {\small
        \begin{tabular}{c|cc||cc}
             \hline
              & \multicolumn{2}{c||}{Semantic Consistency} & \multicolumn{2}{c}{Fidelity} \\
              & CLIP-T $\uparrow$ & CLIP-F $\uparrow$ & FID $\downarrow$ & KID $\downarrow$\\
             \hline
                 MotionCtrl~\cite{wang2023motionctrl}+SVD~\cite{blattmann2023svd} & - & 0.81 & - & - \\
                 CameraCtrl~\cite{he2024cameractrl} & 0.28 & 0.79 & \textbf{32.10} & 0.79 \\
                 AnimateDiff~\cite{guo2023animatediff}+SparseCtrl~\cite{guo2023sparsectrl} & 0.29 & 0.86 & 51.97 & 1.86\\
                 CameraCtrl~\cite{he2024cameractrl}+SparseCtrl~\cite{guo2023sparsectrl} & 0.29 & 0.85 & 61.68 & 2.47 \\
                 Ours & \textbf{0.30} & \textbf{0.93} & 32.90 & \textbf{0.61}\\
             \hline
        \end{tabular} 
    }
\end{table}

\paragraph{Per-video geometric consistency on estate scenes.} Following CameraCtrl~\cite{he2024cameractrl}, we first test the geometry consistency across the frames in the video generated from our model, using the camera trajectories and text prompts from RealEstate10K~\cite{zhou2018realestate10k} (which mostly consists of static scenes). Specifically, we first generate 1000 videos from randomly sampled camera trajectory pairs (two camera trajectories with the same starting transformation) and text captions. All baselines generate one video at a time; our model generates two videos simultaneously. 
For each generated video, we apply the state-of-the-art image matching algorithm SuperGlue~\cite{sarlin20superglue} to extract the correspondences between its first frame and following frames and estimate their relative camera poses using the RANSAC~\cite{fischler1981random, raguram2008comparative} algorithm. To evaluate the quality of correspondences and estimated camera poses, we adopt the same protocol from SuperGlue~\cite{sarlin20superglue}, which 1) evaluates the poses by the angle error of their rotation and translation and 2) evaluates the matched correspondences by their epipolar error (i.e., the distance to the ground truth epipolar line). The results are shown in Tab.~\ref{tab:geometry_evaluation}, where our model significantly outperforms all baselines. More details are provided in our supplementary materials.

\paragraph{Cross-video geometric consistency on generic scenes.} Aside from evaluating the consistency between frames in the same video, we also test our model's ability to preserve the geometry information across different videos. To do that, we randomly sample 500 video pairs (1000 videos in total) using camera trajectory pairs from RealEstate10K~\cite{zhou2018realestate10k} and text prompts from WebVid10M's captions~\cite{bain2021webvid}. To the best of our knowledge, there is no available large video diffusion model that is designed to generate multi-view consistent videos for generic scenes. Hence, we modify the CameraCtrl~\cite{he2024cameractrl} and MotionCtrl~\cite{wang2023motionctrl} to generate video pairs as baselines. Here, we use the text-to-video version of each model to generate a reference video first, then take its first frame as the input of their image-to-video version (i.e., their combination with SparseCtrl~\cite{guo2023sparsectrl} and SVD~\cite{blattmann2023svd}) to derive the second video. We use the same metrics as in the first experiment but instead evaluate between the corresponding frames from the two videos. Results are shown in Tab.~\ref{tab:geometry_evaluation}, where our model greatly outperforms all baselines.
 
                      
             

\begin{figure}[t]
  \centering
  \includegraphics[width=0.966\textwidth]{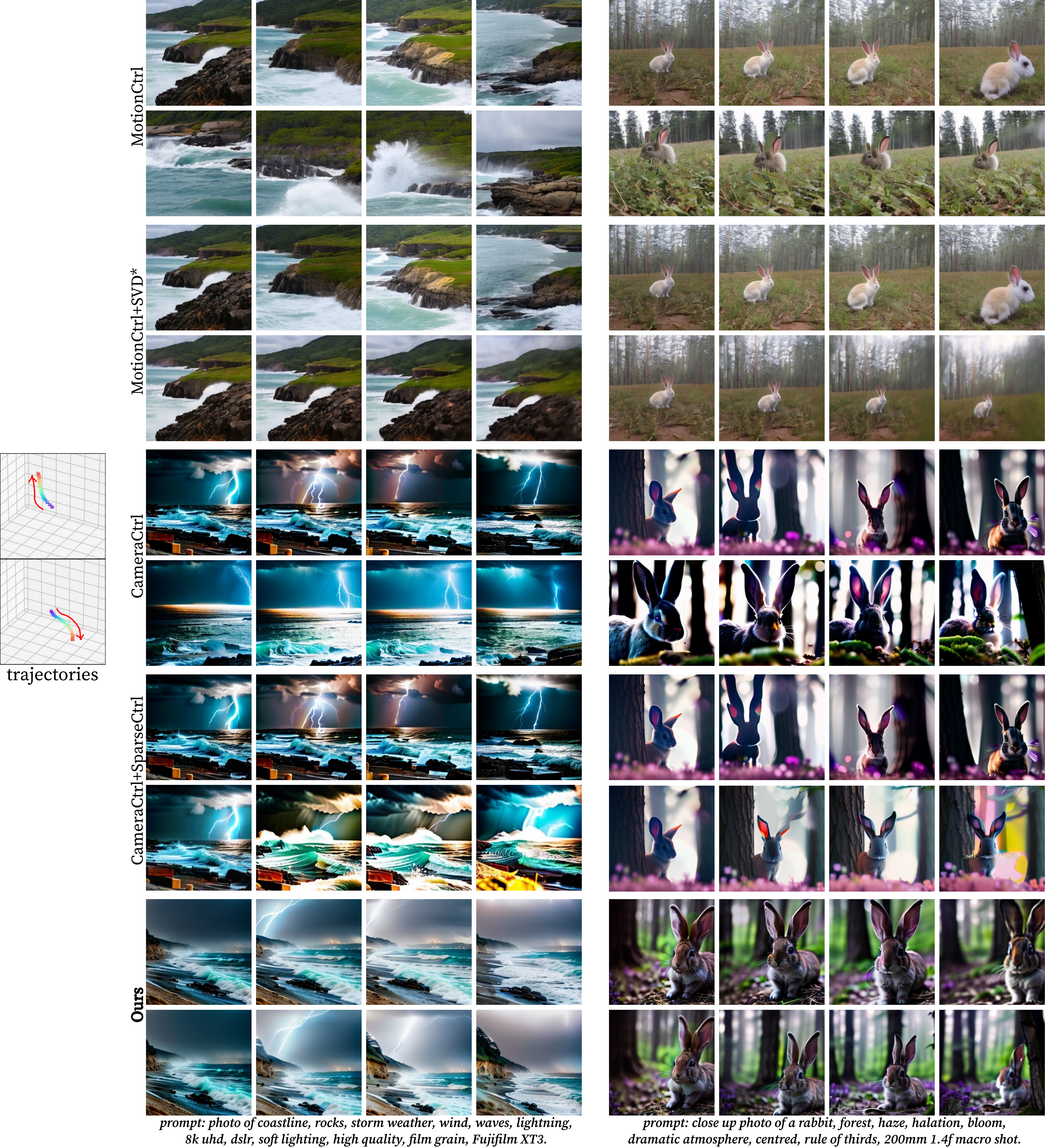}
  \caption{\textbf{Qualitative comparison.} Our method maintains consistency across videos for static and dynamic scenes, while no prior work can generate synchronized different realizations. * Despite our best efforts, we are incapable of getting MotionCtrl~\cite{wang2023motionctrl}+SVD~\cite{blattmann2023svd} to generate any motion beyond the simplest static camera zooming in-and-out. Please refer to our supplemental video for illustration.}
  \label{fig:qualitative_comparison}
\end{figure}
\paragraph{Semantic and fidelity evaluations.}
Following the standard practice of prior works~\cite{guo2023animatediff, wu2022tuneavideo, khachatryan2023text2videozero, cai2023diffdreamer, ceylan2023pix2video, cai2023genren}, we report CLIP~\cite{radford2021clip} embedding similarity between \textbf{1)} each frame of the output video and the corresponding input prompt and \textbf{2)} pairs of frames across video pairs. The former metric, denoted as CLIP-T, is to show that our model does not destroy the appearance/content prior of our base model, and the latter, denoted as CLIP-F, is aimed to show that the cross-view module can improve the semantic and structural consistency between the generated video pair. For these purposes, we randomly sample 1000 videos using camera trajectory pairs from RealEstate10K, along with text captions from WebVid10M (2000 videos generated in total). To further demonstrate our method's ability to maintain high-fidelity generation contents, we report FID~\cite{heusel2017fid} and KID~\cite{binkowski2018kid} $\times 100$ using the implementation \cite{obukhov2020torchfidelity}. We do not compare against models that do not share the same base model as us for FID~\cite{heusel2017fid} and KID~\cite{binkowski2018kid}, since these metrics are strongly influenced by the abilities of the base models. Following prior work~\cite{he2024cameractrl}, we evaluate these two metrics on RealEstate10k~\cite{zhou2018realestate10k} because of the strong undesired bias, e.g., watermarks, on WebVid10M~\cite{bain2021webvid}. As shown in Tab.~\ref{tab:semantic_quantitative}, our model surpasses all baselines for the CLIP~\cite{radford2021clip}-based metrics. This proves our model's ability to synthesize collaborative videos that share a scene while maintaining and improving fidelity according to the prompt. Our model also performs better than or on par with all prior works on fidelity metrics, which indicates robustness to the appearances and content priors learned by our base models.

\subsection{Qualitative Results}
\subsubsection{Comparison with Baselines}
Qualitative comparisons are shown in Fig.~\ref{fig:qualitative_comparison}. Following our quantitative comparisons in Sec.~\ref{sec:quantitative_results}, we compare against CameraCtrl~\cite{he2024cameractrl} and its combination with SparseCtrl~\cite{guo2023sparsectrl}, MotionCtrl~\cite{wang2023motionctrl} and its combination with SVD~\cite{blattmann2023svd}.
The results indicate our method's superiority in aligning the content within the videos, including dynamic content such as lightning, waves, etc. More qualitative results are provided in our supplemental material and video.

\subsubsection{Additional results for arbitrary views generation}
We also show the results of arbitrary view generation shown in Fig.~\ref{fig:multiview_generation}. Using the algorithm introduced in Sec.~\ref{sec:multiview}, our model can generate groups of different camera-conditioned videos that share the same contents, structure, and motion. Please refer to our supplementary video for animated results.


\begin{figure}[t]
    \centering
  \includegraphics[width=\textwidth]{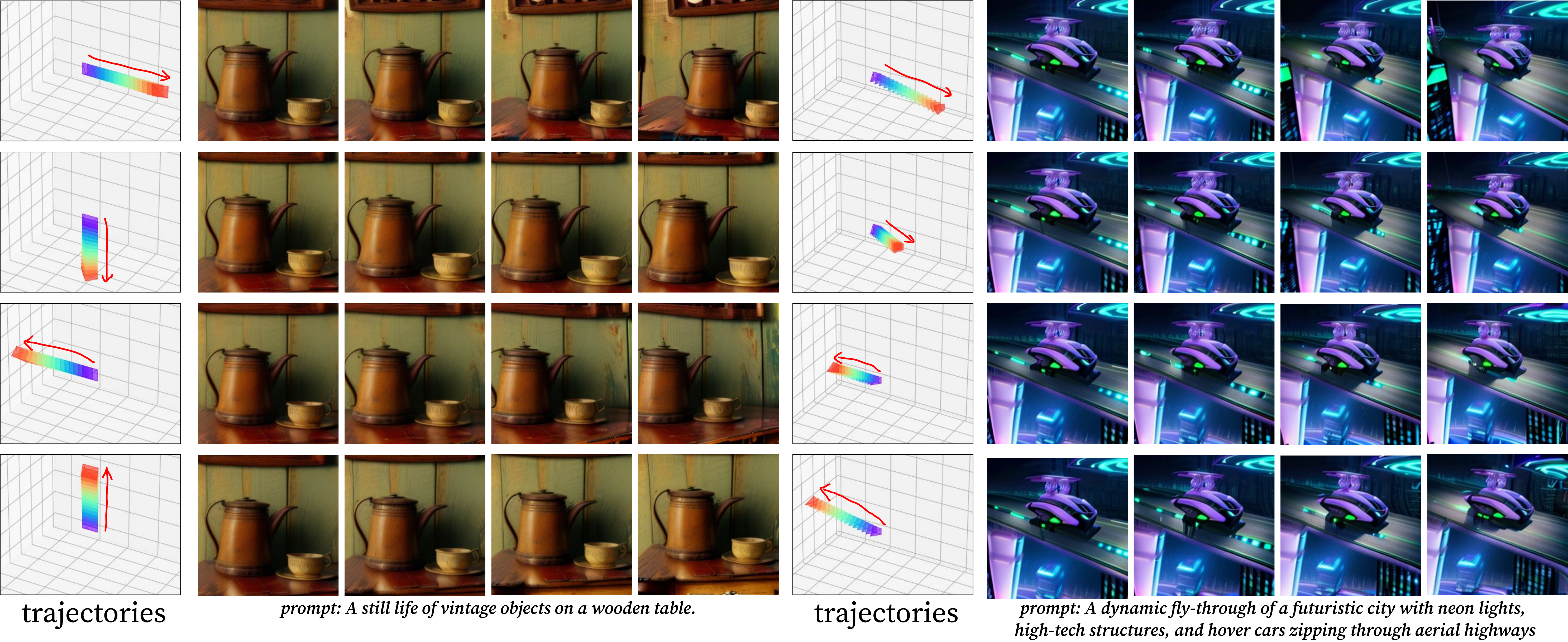}
    \caption{\textbf{Multi-view Video Generation.}  \textit{Left}: The cameras move towards 4 directions, while all cameras are looking at the same 3D point; \textit{Right}: The trajectories are interpolated from one trajectory (\textit{1st Row}) to another (\textit{4th Row}).}
    \label{fig:multiview_generation}
    \vspace{0.25cm}
\end{figure}

\section{Discussion}
\label{sec:discussion}
We introduce \method, a novel framework facilitating collaborative video generation. It ensures seamless information exchange between video instances, synchronizing content and dynamics. Additionally, \method offers camera customization for comprehensive scene capture with multiple cameras.
The core innovation of \method is its utilization of epipolar geometry, derived from reconstruction pipelines, as a constraint. This geometric framework fine-tunes a pre-trained video diffusion model. The training process is enhanced by integrating dynamic, single-view, in-the-wild videos to maintain a diverse range of motion patterns. During inference, \method employs a multi-view sampling strategy to facilitate efficient information sharing across videos, resulting in a "collaborative diffusion" effect for unified video output.
To our knowledge, \method represents the first approach to tackle the complexities of multi-view or multi-trajectory video synthesis. It significantly advances beyond existing multi-view image generation technologies, such as Zero123~\cite{liu2023zero1to3}, by also ensuring consistent dynamics across all videos produced. This breakthrough marks a critical development in video synthesis, promising new capabilities and applications.
\subsection{Limitations}
\label{sec:limitations}
\method faces certain limitations.
Primarily, the effectiveness of \method is inherently linked to the performance of its base models, AnimateDiff~\cite{guo2023animatediff} and CameraCtrl~\cite{he2024cameractrl}. While \method strives to facilitate robust information exchange across videos, it does not inherently solve the challenge of internal consistency within individual videos. As a result, issues such as uncanny shape shifting and dynamic inconsistencies that are presented in the base models may persist, affecting the overall consistency across the video outputs.
Additionally, it cannot synthesize videos in real time, owing to the computationally intensive nature of diffusion models. Nevertheless, the field of diffusion model optimization is rapidly evolving, and forthcoming advancements are likely to enhance the efficiency of \method significantly.
\subsection{Broader Impacts}
Our approach represents a significant advancement in multi-camera video synthesis, with wide-ranging implications for industries such as filmmaking and content creation.
However, we are mindful of the potential misuse, particularly in creating deceptive content like deepfakes. We categorically oppose the exploitation of our methodology for any purposes that infringe upon ethical standards or privacy rights. To counteract the risks associated with such misuse, we advocate for the continuous development and improvement of deepfake detection technologies.

\paragraph{Acknowledgement}
This project was partly supported by Google and Samsung.

\clearpage
{\small
  \bibliographystyle{ieee}
  \bibliography{main}
}


\clearpage
\appendix

\newtheorem{theorem}{Theorem}[section]
\newtheorem{corollary}{Corollary}[theorem]
\newtheorem{lemma}[theorem]{Lemma}

\section{Appendix / supplemental material}

\subsection{More Analysis on Collaborative Video Generation.}
\label{sec:supp_analysis}
For simplicity, the video conditions are omitted in this section without loss of generality.  In the paper, we describe the collaborative video diffusion model as a multivariate denoising function $p_\theta(\bm{v}^{1,...,M}_t)$ that estimates the real distribution $q(\bm{v}^{1,...,M}_{t-1} | \bm{v}^{1,...,M}_t, \bm{v}^{1,...,M}_0)$. Following Ho et.al.~\cite{ho2020ddpm}, the problem can be transformed into the optimizing a noise prediction network $\epsilon_\theta(\bm{v}^{1,...,M}_t )$ to predict $\epsilon_t = \frac{1}{\sqrt{1-\bar\alpha_t}}\bm{v}^{1,...,M}_t - \frac{\sqrt{\bar\alpha_t}}{\sqrt{1-\bar\alpha_t}}\bm{v}^{1,...,M}_0$. On the other hand, Song et.al.~\cite{song2020score} demonstrated the relation between noise prediction and the score function $\bm{s}_\theta(\bm{v}^{1,...,M}_t, t) \approx \nabla \log q(\bm{v}^{1,...,M}_{t})$ is:
\begin{equation}
    \bm{s}_\theta(\bm{v}^{1,...,M}_t, t) = -\frac{\bm{\epsilon}_\theta(\bm{v}^{1,...,M}_t, t)}{\sqrt{1-\bar\alpha_t}}.
\end{equation}
As discussed in the paper, directly training $\bm{s}_\theta(\bm{v}^{1,...,M}_t, t)$ for an arbitrary $M$ is intractable due to the lack of multi-view datasets, so we reduce the problem into generating video pairs ($M=2$) instead. Specifically, we train a score function $\bm{s}_\theta(\bm{v}^{i,j})$ using video pair datasets and apply it to infer all $M$ videos. Our collaborative video score function, denoted as $\bm{s}_{\text{CVD}}(\bm{v}^{1,...,M})$, is defined as: 
\begin{equation}
    \bm{s}_{\text{CVD}}(\bm{v}^{1,...,M}) \ \dot= \ \sum_{(i,j)\in \mathcal{P}} w^{i,j}_{i} \bm{s}_\theta(\bm{v}^{i,j})_i + w^{i,j}_{j} \bm{s}_\theta(\bm{v}^{i,j})_j, 
\end{equation}
where $\mathcal{P}$ is the set of all selected video pairs, and $\bm{s}_\theta(\bm{v}^{i,j})_i=\bm{e}_i\bm{s}_\theta(\bm{v}^{i,j})$ represents the $i$'th video component of the score function. Note that the order of 
 $i,j$ is irrelevant.In essence, we utilize the weighted sum of video pair score functions to depict the score function of all videos. We demonstrate that the defined score function $\bm{s}_\text{CVD}(\bm{v}^{1,...,M})$ can estimate the real score function $\nabla \log q(\bm{v}^{1,...,M})$, only if $\sum_{(i,j) \in \mathcal{P}}w^{i,j}_i = 1$ for all $i\in 1,...,M$.

\begin{lemma}
\label{theorem:lemma_0}
Let $S$ be a subset of $\{1,...,M\}$, and $q(\bm{v}^S_t)$ be the density function of a set of video features $\bm{v}_t^S=\{\bm{v}_t^k | k\in S\}$ derived from the forward diffusion process, that is,  $q(\bm{v}^S_t\ | \ \bm{v}^S_0) = \mathcal{N}(\bm{v}^S_t; \sqrt{\bar\alpha_t}\bm{v}^S_0, (1-\bar\alpha_t) I)$. Then $\nabla_{\bm{v}^k_t}\log q(\bm{v}_t^{S} | \ \bm{v}^S_0) = \frac{\bm{1}(k\in S)}{1-\bar\alpha_t} (\sqrt{\bar \alpha_t} \bm{v}_0^k - \bm{v}_t^k)$, where $\bm{1}(k\in S)$ equals to $1$ if $k\in S$ and 0 otherwise. 
\end{lemma}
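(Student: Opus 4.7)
The plan is to prove the lemma by direct computation from the Gaussian density formula, separating the two cases $k\in S$ and $k\notin S$ via the definition of the indicator. Since the forward process adds independent Gaussian noise to each coordinate with identity covariance $(1-\bar\alpha_t) I$, the joint conditional density over the index set $S$ factorizes coordinatewise, and the argument reduces to differentiating a single quadratic form.

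First I would write the log-density explicitly. Using the stated Gaussian form,
\begin{equation}
\log q(\bm{v}_t^S \mid \bm{v}_0^S) \;=\; -\frac{1}{2(1-\bar\alpha_t)} \sum_{j\in S} \bigl\lVert \bm{v}_t^j - \sqrt{\bar\alpha_t}\,\bm{v}_0^j \bigr\rVert^2 \;+\; C,
\end{equation}
where $C$ collects all normalization constants independent of $\bm{v}_t^S$. The key observation is that, because the covariance is $(1-\bar\alpha_t) I$, the quadratic form splits into a sum of independent per-index terms, so only the $j=k$ summand can contribute to the gradient with respect to $\bm{v}_t^k$.

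Next I would take $\nabla_{\bm{v}_t^k}$ and split into cases. If $k\notin S$, then no summand depends on $\bm{v}_t^k$ and the gradient is $0$, which matches the right-hand side since $\bm{1}(k\in S)=0$. If $k\in S$, only the $j=k$ term survives, and differentiating $-\tfrac{1}{2(1-\bar\alpha_t)}\lVert \bm{v}_t^k - \sqrt{\bar\alpha_t}\,\bm{v}_0^k\rVert^2$ yields
\begin{equation}
\nabla_{\bm{v}_t^k} \log q(\bm{v}_t^S \mid \bm{v}_0^S) \;=\; -\frac{1}{1-\bar\alpha_t}\bigl(\bm{v}_t^k - \sqrt{\bar\alpha_t}\,\bm{v}_0^k\bigr) \;=\; \frac{1}{1-\bar\alpha_t}\bigl(\sqrt{\bar\alpha_t}\,\bm{v}_0^k - \bm{v}_t^k\bigr),
\end{equation}
which again matches the claim since $\bm{1}(k\in S)=1$. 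Combining the two cases via the indicator gives the stated identity.

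Honestly, there is no real obstacle here: this is a one-line consequence of differentiating a Gaussian log-density with diagonal covariance. The only care needed is to make the indexing explicit, so that it is clear that the identity-covariance assumption is what lets the gradient pick out a single index and kill all cross terms; this is also what justifies assigning zero when $k\notin S$, since in that case $\bm{v}_t^k$ simply does not appear in $\log q(\bm{v}_t^S\mid\bm{v}_0^S)$ at all.
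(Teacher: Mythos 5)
Your proof is correct and follows essentially the same route as the paper's: differentiate the Gaussian log-density's quadratic form with diagonal covariance, with the indicator arising because $\bm{v}_t^k$ appears only when $k\in S$. Your version is merely more explicit about the coordinatewise factorization and the $k\notin S$ case, which the paper compresses into a single line.
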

\begin{proof}
\begin{align}
\nabla_{\bm{v}^k_t}\log q(\bm{v}_t^{S} | \bm{v}^S_0) =& \nabla_{\bm{v}^k_t}\log (\mathcal{N}(\bm{v}^S_t;\sqrt{\bar\alpha_t}\bm{v}^S_0, (1-\bar\alpha_t) I)) \\
=& \nabla_{\bm{v}^k_t}\frac{ - (\bm{v}^S_t-\sqrt{\bar\alpha_t}\bm{v}^S_0)^2}{2(1-\bar\alpha_t)} \\
=& \frac{\bm{1}(k\in S)}{1-\bar\alpha_t} (\sqrt{\bar\alpha_t}\bm{v}^k_0 - \bm{v}^k_t) 
\end{align}
\end{proof}
 
\begin{lemma}[Updated Tweedies's Formula]
\label{theorem:lemma_1}
Let $S$ be a subset of $\{1,...,M\}$, and $q(\bm{v}^S_t)$ be the density function of a set of video features $\bm{v}_t^S=\{\bm{v}_t^k | k\in S\}$ derived from the forward diffusion process, then $\nabla_{\bm{v}^k_t}\log q(\bm{v}_t^{S}) = \frac{\bm{1}(k\in S)}{1-\bar\alpha_t} (\sqrt{\bar \alpha_t}\bm{E}_q(\bm{v}_0^k | \bm{v}_t^S) - \bm{v}^k_t)$. 
\end{lemma}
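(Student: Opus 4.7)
The plan is to mimic the classical derivation of Tweedie's formula, applied now to the multi-video marginal $q(\bm{v}^S_t)$, using Lemma~\ref{theorem:lemma_0} as the one-step identity for the conditional score. Concretely, I will marginalize over $\bm{v}_0^S$, push the gradient through the integral via the log-derivative (score-function) trick, invoke Lemma~\ref{theorem:lemma_0} to substitute the closed-form expression for $\nabla_{\bm{v}_t^k}\log q(\bm{v}_t^S \mid \bm{v}_0^S)$, and then recognize the resulting integrand as the posterior expectation of $\bm{v}_0^k$ given $\bm{v}_t^S$.

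First I would write $q(\bm{v}_t^S) = \int q(\bm{v}_t^S \mid \bm{v}_0^S)\, q(\bm{v}_0^S)\, d\bm{v}_0^S$ and compute
\begin{equation}
\nabla_{\bm{v}_t^k}\log q(\bm{v}_t^S) \;=\; \frac{1}{q(\bm{v}_t^S)} \int \bigl(\nabla_{\bm{v}_t^k} q(\bm{v}_t^S \mid \bm{v}_0^S)\bigr)\, q(\bm{v}_0^S)\, d\bm{v}_0^S,
\end{equation}
where differentiation under the integral sign is justified by the Gaussian tails of the forward kernel. I would then rewrite $\nabla_{\bm{v}_t^k} q(\bm{v}_t^S \mid \bm{v}_0^S) = q(\bm{v}_t^S \mid \bm{v}_0^S)\,\nabla_{\bm{v}_t^k}\log q(\bm{v}_t^S \mid \bm{v}_0^S)$ and substitute Lemma~\ref{theorem:lemma_0} to replace the inner gradient with $\tfrac{\bm{1}(k\in S)}{1-\bar\alpha_t}(\sqrt{\bar\alpha_t}\bm{v}_0^k - \bm{v}_t^k)$. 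Pulling the (constant in $\bm{v}_0^S$) factor $\tfrac{\bm{1}(k\in S)}{1-\bar\alpha_t}$ outside the integral and applying Bayes' rule $q(\bm{v}_0^S \mid \bm{v}_t^S) = q(\bm{v}_t^S \mid \bm{v}_0^S) q(\bm{v}_0^S) / q(\bm{v}_t^S)$ turns the remaining integral into a posterior expectation, giving
\begin{equation}
\nabla_{\bm{v}_t^k}\log q(\bm{v}_t^S) \;=\; \frac{\bm{1}(k\in S)}{1-\bar\alpha_t}\,\bigl(\sqrt{\bar\alpha_t}\,\bm{E}_q(\bm{v}_0^k \mid \bm{v}_t^S) \;-\; \bm{v}_t^k\bigr),
\end{equation}
where the $-\bm{v}_t^k$ term comes out of the expectation because it is a constant with respect to the integration variable $\bm{v}_0^S$.

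The only subtle case to address is $k \notin S$: then $\bm{v}_t^k$ is not among the variables of $q(\bm{v}_t^S)$, so the gradient is trivially zero, which matches $\bm{1}(k \in S) = 0$ on the right-hand side; Lemma~\ref{theorem:lemma_0} already handles this case through the same indicator, so the substitution is consistent. The main (very mild) obstacle is simply verifying the interchange of gradient and integral, which follows from dominated convergence applied to the Gaussian forward kernel and its derivatives; beyond that the argument is a direct score-function computation.
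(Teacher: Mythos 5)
Your proposal is correct and follows essentially the same route as the paper's proof: marginalize over $\bm{v}_0^S$, differentiate under the integral, use the log-derivative trick together with Lemma~\ref{theorem:lemma_0}, apply Bayes' rule to recognize the posterior expectation, and pull out the $-\bm{v}_t^k$ term. Your additional remarks on the $k\notin S$ case and the justification for interchanging gradient and integral are fine refinements of the same argument, not a different approach.
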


\begin{proof}
\begin{align}
\nabla_{\bm{v}^k_t}\log q(\bm{v}_t^{S}) =& \frac{\nabla_{\bm{v}^k_t}q(\bm{v}^S_t)}{q(\bm{v}^S_t)} \\
=& \frac{\nabla_{\bm{v}^k_t}\bm{E}_{\bm{v}^S_0}(q(\bm{v}^S_t | \bm{v}^S_0))}{q(\bm{v}^S_t)} \\
=& \frac{\bm{E}_{\bm{v}^S_0}(\nabla_{\bm{v}^k_t}q(\bm{v}^S_t | \bm{v}^S_0))}{q(\bm{v}^S_t)} \\
=& \int \frac{q(\bm{v}_0^S)}{q(\bm{v}_t^S)} \nabla_{\bm{v}_t^k} 
 q(\bm{v}^S_t | \bm{v}^S_0) \ d\bm{v}^S_0 \\
=& \int q(\bm{v}_0^S | \bm{v}_t^S) \nabla_{\bm{v}_t^k} \log
 q(\bm{v}^S_t | \bm{v}^S_0) \ d\bm{v}^S_0 \quad \text{(Bayes' Theorem)} \\
=& \int q(\bm{v}_0^S | \bm{v}_t^S) \cdot \frac{\bm{1}(k\in S)}{1-\bar\alpha_t} (\sqrt{\bar \alpha_t} \bm{v}_0^k - \bm{v}_t^k) \ d\bm{v}^S_0 \quad \text{(Lemma.~\ref{theorem:lemma_0})} \\
=&  \frac{\bm{1}(k\in S)}{1-\bar\alpha_t} ( \sqrt{\bar \alpha_t} \int q(\bm{v}_0^S | \bm{v}_t^S) \bm{v}_0^k \ d\bm{v}^S_0 - \bm{v}_t^k) \\
=&  \frac{\bm{1}(k\in S)}{1-\bar\alpha_t} ( \sqrt{\bar \alpha_t} \int q(\bm{v}_0^k | \bm{v}_t^S) \bm{v}_0^k \ d\bm{v}^k_0 - \bm{v}_t^k) \\
=& \frac{\bm{1}(k\in S)}{1-\bar\alpha_t} (\sqrt{\bar \alpha_t}\bm{E}_q(\bm{v}_0^k | \bm{v}_t^S) - \bm{v}^k_t) 
\end{align}
\end{proof}

\begin{theorem}
The function $\bm{s}_\text{CVD}(\bm{v}_t^{1,...,M})$ can be an unbiased approximation of the real score function $\nabla \log q(\bm{v}_t^{1,...,M})$ for all timesteps $t\in1,...,T$, only if $\sum_{(i,j) \in \mathcal{P}}w^{i,j}_i = 1$ for all $i\in 1,...,M$.
\end{theorem}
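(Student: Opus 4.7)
The plan is to apply Tweedie's formula (Lemma~\ref{theorem:lemma_1}) to both the true multi-video score and the pair scores that make up $\bm{s}_{\text{CVD}}$, and then match the coefficient of $\bm{v}_t^i$ on each side of the equation $\bm{s}_{\text{CVD}}(\bm{v}_t^{1,\dots,M}) = \nabla \log q(\bm{v}_t^{1,\dots,M})$. Since $\bm{v}_t^i$ appears as an affine term in each score via Tweedie, a necessary condition for the identity to hold pointwise in $\bm{v}_t^i$ is that these affine coefficients agree, and this will directly yield the claimed constraint $\sum_{(i,j)\in\mathcal{P}} w^{i,j}_i = 1$.

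First, I would fix an index $i \in \{1,\dots,M\}$ and project both sides onto their $i$-th video component by applying $\bm{e}_i$. On the right, Lemma~\ref{theorem:lemma_1} with $S = \{1,\dots,M\}$ gives
\begin{equation}
\nabla_{\bm{v}_t^i} \log q(\bm{v}_t^{1,\dots,M}) = \frac{1}{1-\bar\alpha_t}\bigl(\sqrt{\bar\alpha_t}\, \bm{E}_q(\bm{v}_0^i \mid \bm{v}_t^{1,\dots,M}) - \bm{v}_t^i\bigr).
\end{equation}
On the left, assuming the trained pair score satisfies $\bm{s}_\theta(\bm{v}_t^{i,j}) = \nabla \log q(\bm{v}_t^{i,j})$, I apply Lemma~\ref{theorem:lemma_1} again with $S = \{i,j\}$ to each summand to obtain
\begin{equation}
[\bm{s}_\theta(\bm{v}_t^{i,j})]_i = \frac{1}{1-\bar\alpha_t}\bigl(\sqrt{\bar\alpha_t}\, \bm{E}_q(\bm{v}_0^i \mid \bm{v}_t^{i,j}) - \bm{v}_t^i\bigr).
\end{equation}
Note that only the pairs of the form $(i,j)$ in $\mathcal{P}$ contribute to the $i$-th component (since $\bm{e}_i \bm{s}_\theta(\bm{v}_t^{k,\ell})$ vanishes for $k,\ell \neq i$ by Lemma~\ref{theorem:lemma_0}), so the $i$-th component of $\bm{s}_{\text{CVD}}$ reduces to $\sum_{(i,j)\in\mathcal{P}} w^{i,j}_i [\bm{s}_\theta(\bm{v}_t^{i,j})]_i$.

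Next, I would equate the two expressions and separate the terms linear in $\bm{v}_t^i$ from the conditional-expectation terms. The coefficient of $-\bm{v}_t^i/(1-\bar\alpha_t)$ on the right is $1$, while on the left it is $\sum_{(i,j)\in\mathcal{P}} w^{i,j}_i$. Since $\bm{v}_t^i$ can be chosen arbitrarily (the identity is meant to hold at every point in the support, and the conditional expectations $\bm{E}_q(\bm{v}_0^i \mid \cdot)$ are bounded independently of the direction in which $\bm{v}_t^i$ is scaled), matching the $\bm{v}_t^i$-linear part forces
\begin{equation}
\sum_{(i,j)\in\mathcal{P}} w^{i,j}_i = 1.
\end{equation}
Repeating the argument for each $i$ gives the claim.

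The main obstacle is the coefficient-matching step: one must justify why the linear-in-$\bm{v}_t^i$ coefficient can be isolated from the conditional-expectation terms. The cleanest way is to argue by contradiction: if $\sum w^{i,j}_i \neq 1$, then taking $\|\bm{v}_t^i\| \to \infty$ along a ray makes the residual $(\sum w^{i,j}_i - 1)\bm{v}_t^i/(1-\bar\alpha_t)$ unbounded while the difference of conditional expectations $\sqrt{\bar\alpha_t}[\bm{E}_q(\bm{v}_0^i \mid \bm{v}_t^{1,\dots,M}) - \sum w^{i,j}_i \bm{E}_q(\bm{v}_0^i \mid \bm{v}_t^{i,j})]/(1-\bar\alpha_t)$ stays bounded (since $\bm{v}_0^i$ lives on the bounded data manifold under mild assumptions). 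Hence equality of the two sides for all $\bm{v}_t^i$ forces the linear coefficient to match, yielding the necessary condition.
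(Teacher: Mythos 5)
Your proposal is correct and follows essentially the same route as the paper's proof: project onto the $i$-th component (where only pairs containing $i$ contribute), invoke score matching to replace the learned pair score by the true one, apply the updated Tweedie's formula (Lemma~\ref{theorem:lemma_1}) to both the pair scores and the full score, and match the terms linear in $\bm{v}_t^i$ to force $\sum_{(i,j)\in\mathcal{P}} w^{i,j}_i = 1$. Your closing limiting argument ($\|\bm{v}_t^i\|\to\infty$ with bounded conditional expectations) merely makes explicit the coefficient-matching step that the paper asserts directly, so it is a welcome but not substantively different refinement.
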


\begin{proof}
For any $k\in1,..,M$, the $k$'th component of $\bm{s}_\text{CVD}(\bm{v}^{1,...,M}_t)$ can be written as:
\begin{align}
& \bm{s}_\text{CVD}(\bm{v}^{1,...,M}_t)_k \\
=& (\sum_{(i,j)\in \mathcal{P}} w^{i,j}_{i} \bm{s}_\theta(\bm{v}^{i,j})_i + w^{i,j}_{j} \bm{s}_\theta(\bm{v}^{i,j})_j)_k \\
=& \sum_{(k,j)\in \mathcal{P}} w^{k,j}_{k} \bm{s}_\theta(\bm{v}^{k,j})_k  \\
\approx& \sum_{(k,j)\in \mathcal{P}} w^{k,j}_{k} \nabla_{\bm{v}^{k}} \log q(\bm{v}^{k,j}) \quad (\text{Score Matching}) \\
\label{eq:now} =& \frac{1}{1-\bar\alpha_t} \sum_{(k,j)\in \mathcal{P}} w^{k,j}_k (\sqrt{\bar \alpha_t}\bm{E}_q(\bm{v}_0^k | \bm{v}_t^{k,j}) - \bm{v}^k_t)  \quad (\text{Lemma.~\ref{theorem:lemma_1}})
\end{align}
To unbiasedly estimate $\nabla_{\bm{x}^k_t} \log q(\bm{v}_t^{1,...,M}) = \frac{1}{1-\bar\alpha_t} (\sqrt{\bar \alpha_t}\bm{E}_q(\bm{v}_0^k | \bm{v}_t^{1,...,M}) - \bm{v}^k_t) $ from Eq.~\ref{eq:now} w.r.t. all $t$ and $\bm{v}^k_t$, there must be $\sum_{(k,j)\in \mathcal{P}} w^{k,j}_k \bm{v}_t^{k} = \bm{v}_t^{k}$, which means $\sum_{(k,j)\in \mathcal{P}} w^{k,j}_k=1$.
\end{proof}
In addition, we can observe that the accuracy of the estimation from Eq.~\ref{eq:now} heavily relies on the similarity between $\sum_{(k,j)\in \mathcal{P}} w^{k,j}_k \bm{E}_q(\bm{v}_0^k | \bm{v}_t^{k,j})$ and $\bm{E}_q(\bm{v}_0^k | \bm{v}_t^{1,...,M})$. That means, when we apply a denoising step to a noisy input $\bm{v}_t^{1,...,M}$, the output $\bm{v}_{t-1}^{1,...,M}$ is more likely to align with the true distribution if the prediction of $\bm{v}^k_0$ from $\bm{v}_t^{k,j}$ resembles the prediction of $\bm{v}^k_0$ from all $\bm{v}_t^{1,...,M}$. We think this is fairly reasonable in the context of consistent camera-controlled video generation, as the underlying geometry of captured videos can often be discerned from just a few views. We believe this is the key reason why our model can generate consistent multi-view videos trained from video pair data only. 

\subsection{Ablation Study on CVD}
\begin{table}[h]
    \centering
    \caption{\textbf{Ablation Study} conducted on generic scenes (prompts from WebVid10M~\cite{bain2021webvid}), where we deactivate each of our introduced modules. Results indicate that our full pipeline outperforms the ablation settings for both geometric and semantic consistencies.}
    \label{tab:ablation_study}
    {\small
        \begin{tabular}{c|c|c||cc}
             \hline
              & Rot. AUC  & Trans. AUC & \multicolumn{2}{c}{Semantic Consistency}  \\
               & (@$5^\circ/10^\circ/20^\circ$) & (@$5^\circ/10^\circ/20^\circ$) & CLIP-T $\uparrow$ & CLIP-F $\uparrow$  \\
             \hline
                 Ours w/o Epi & 16.8 / 31.8 / 49.1 & 1.5 / 5.4 / 13.7 & 0.30 & 0.91 \\
                 Ours RE10K only & 17.9 / 29.8 / 43.3 & 1.7 / 5.3 / 13.2 & 0.29 & 0.90 \\
                 Ours w/o HG & 22.0 / 35.5 / 50.5  & 2.3 / 6.1 / 14.5 & 0.29 & 0.92 \\
                 Ours 1 Layer & 22.7 / 37.8 / 54.3 & 3.1 / 8.5 / 19.2 & 0.29 & 0.92 \\
                 Ours & \textbf{25.2} / \textbf{40.7} / \textbf{57.5} & \textbf{3.7} / \textbf{9.6} / \textbf{19.9} & \textbf{0.30} & \textbf{0.93} \\
             \hline
        \end{tabular} 
}
\end{table}
We perform a thorough ablation study in Tab.~\ref{tab:ablation_study} to verify our design choices, where the variants are: \textbf{1)} No epipolar line constraints (\textit{Ours w/o Epi}), where we perform a normal self-attention instead of epipolar attention in our \textit{Cross-View Synchronization Module}; \textbf{2)} No mixed training (\textit{Ours RE10K only}), where we follow the setups in CameraCtrl~\cite{he2024cameractrl} and train the model only on RealEstate10k~\cite{zhou2018realestate10k}; \textbf{3)} No homography augmentation (\textit{Ours w/o HG}), where we switch off the homography transformations applied to WebVid10M~\cite{bain2021webvid} videos during training; and \textbf{4)} using only 1 \textit{Cross-View Synchronization Module} instead of 2 (\textit{Ours 1 Layer}). The ablation study indicates that while we can get semantically consistent outputs without epipolar constraints, they are essential to gain geometrical consistency. We also observe that the mixed training strategy and homography augmentation greatly improve all metrics, including semantic consistency, further verifying their purpose of closing the gap between static training scenes and desired dynamic outputs.

\subsection{Implementing Details}
\label{sec:supp_details}
We built our pipeline on top of AnimateDiff~\cite{guo2023animatediff}, a popular open-source T2V model that is widely used among artists. We additionally deploy CameraCtrl~\cite{he2024cameractrl} to utilize its camera conditioning ability. Following this line of works, we benefit from their plug-and-play property and can swap our base model with a fine-tuned version, e.g., via DreamBooth~\cite{ruiz2022dreambooth} or LORA~\cite{hu2021lora}. 

\subsubsection{Training}
We select 65,000 videos from RealEstate10K~\cite{zhou2018realestate10k} and 2,400,000 videos from WebVid10M~\cite{bain2021webvid} to train our model. Each data point consists of two videos of 16 frames and their corresponding camera extrinsic and intrinsic parameters.  For RealEstate10K, we randomly sample a 31-frame clip from the original video and split it into two videos using the method described in the paper. For WebVid10M, we sample a 16-frame clip, duplicate it to create two videos, and then apply random homography deformations to the second video. The homography transformation matrix  $H=H_tH_rH_sH_{sh}H_p$ is defined as the composition of a series of transformations: translation, rotation, scaling, shearing, and projection, where:
\begin{equation}
    \begin{gathered}
        H_t=\begin{bmatrix}
        1 & 0 & t_0 \\
        0 & 1 & t_1 \\
        0 & 0 & 1 
        \end{bmatrix}, 
        H_r=\begin{bmatrix}
        \cos(\theta) & -\sin(\theta) & 0 \\
        \sin(\theta) & \cos(\theta) & 0 \\
        0 & 0 & 1 
        \end{bmatrix},\\
        H_s=\begin{bmatrix}
        1+s_0 & 0 & 0 \\
        0 & 1+s_1 & 0 \\
        0 & 0 & 1 
        \end{bmatrix},
        H_{sh}=\begin{bmatrix}
        1 & sh_0 & 0 \\
        sh_1 & 1 & 0 \\
        0 & 0 & 1 
        \end{bmatrix},
        H_{p}=\begin{bmatrix}
        1 & 0 & 0 \\
        0 & 1 & 0 \\
        p_0 & p_1& 1 
        \end{bmatrix}
    \end{gathered}
\end{equation}
are transformation matrices parameterized by controlling vectors $t, \theta, s, sh, p$. We aim for the first frame of the deformed video to remain unchanged, with the deformation gradually increasing in subsequent frames. To achieve this, we randomly sample the controlling vectors for the last frame from normal distributions. Then, we interpolate these vectors from $0$ to the sampled values to obtain the vectors for each intermediate frame and calculate the corresponding matrices.

Following \cite{he2024cameractrl}, we use the Adam optimizer~\cite{kingma2017adam} with learning rate $1e-4$. During training, we freeze the vanilla parameters from our backbones and optimize only our newly injected layers. We mix the data points from RealEstate10K and WebVid10M under the ratio of $7:3$ and train the model in two phases alternatively. All models are trained on 8 NVIDIA A100 GPUs for 100k iterations using an effective batch size 8. The training takes approximately 30 hours.

\subsubsection{Inference}
We use DDIM~\cite{song2020ddim} scheduler with 1000 steps during training and 25 steps during inference. Assuming $w^{k,j}_k$ is independent with $j$, we have $w^{k,j}_k=w_k = \frac{1}{| (k,j)\in \mathcal{P} |} $. Our algorithm is shown in Alg.~\ref{alg:inference}. We use the partitioning strategy in all of our experiments. For 2-view (video pair) results, we use $R=1$(no recurrent denoising) and $P=1$; For 4-view results, we use $R=4, P=1$; and for 6-view results, we use $R=6, P=2$. 
We demonstrate multi-view video generation results in our supplementary videos. Additionally, we show three potential applications of our algorithm: long looping videos, view switching, and potential 3D generation.   
\RestyleAlgo{ruled}
\SetKwInput{KwInput}{Input}
\SetKwInput{KwParameter}{Parameter}
\SetKwInput{KwOutput}{Output}
\SetKwComment{Comment}{/* }{ */}

\begin{algorithm}[hbt!]
\caption{Algorithm for arbitrary number of videos generation}\label{alg:inference}
\KwParameter{Denoising steps $T$, recurrent steps $R$, video number $M$, noise scheduling parameters $\{\bar\alpha_t\}_{t=1}^T$, pair selecting strategy $\textit{Stg}\in \{\text{Exhaustive, Partition}\}$, partition number $Q$}
\KwInput{$\bm{v}^{1,...,M}_T$ sampled from $\mathcal{N}(0, I)$, video pair diffusion model $\epsilon_\theta$, text prompt $y$, camera trajectories $cam^{1,...,M}$}
\For{$t=T,T-1,...,1$}
{
    $\epsilon^{1,...,M}_{\text{out}} \gets 0$\;
    \For{$r=0,1,...,R-1$}
    {
        \eIf{\textit{Stg} is \text{Exhaustive}}
        {
            $\mathcal{P} \gets \{(i,j) \ | \ i,j \in 1,...,M, i\neq j\}$
            \Comment*[r]{Selecting all pairs}
            $denom \gets M-1$\;
        } 
        {
            $\mathcal{P} \gets \{\}$\; 
            \For{$q=0,1,...,Q-1$} 
            {
                $\mathcal{P}.\text{Extend}(\text{RandomPairPartition}(1,2,..,M))$ 
            }
            $denom \gets Q$\;
        }
        \For{$(i,j) \in \mathcal{P}$}
        {
            $\epsilon^i_{\text{out}} \gets \epsilon^i_{\text{out}} + \bm{\epsilon}^i_\theta(\bm{v}^{i,j}_t, t, cam^{i,j})$\;
            $\epsilon^j_{\text{out}} \gets \epsilon^j_{\text{out}} + \bm{\epsilon}^j_\theta(\bm{v}^{i,j}_t, t, cam^{i,j})$\;
        }
        $\bm{v}^{1,...,M}_{t-1} = \text{NoiseSchedule}(\epsilon^{1,...,M}_{\text{out}}/denom, \bm{v}^{1,...,M}_{t}, t)$\;
        \If{$r \neq R-1$}
        {
            $\epsilon' \sim \mathcal{N}(0, I)$\;
            $\bm{v}^{1,...,M}_{t} = \sqrt{\bar\alpha_t / \bar\alpha_{t-1}}\bm{v}^{1,...,M}_{t-1} + \sqrt{1-\bar\alpha_t/\bar\alpha_{t-1}}\epsilon'$ \Comment*[r]{Renoise}
        }
    }
}
\end{algorithm}

\subsection{Results of Attention Maps}
We show an exemplar visualization of our epipolar-based attention in Fig.~\ref{fig:attention_visualization}, where we take the highlighted pixel from the left image, and visualize its corresponding attention probability after softmax. We can observe that information is taken from the second image according to the epipolar line, and specifically, the corresponding region is being attended to.
\begin{figure}[t]
  \centering
  \includegraphics[width=\textwidth]{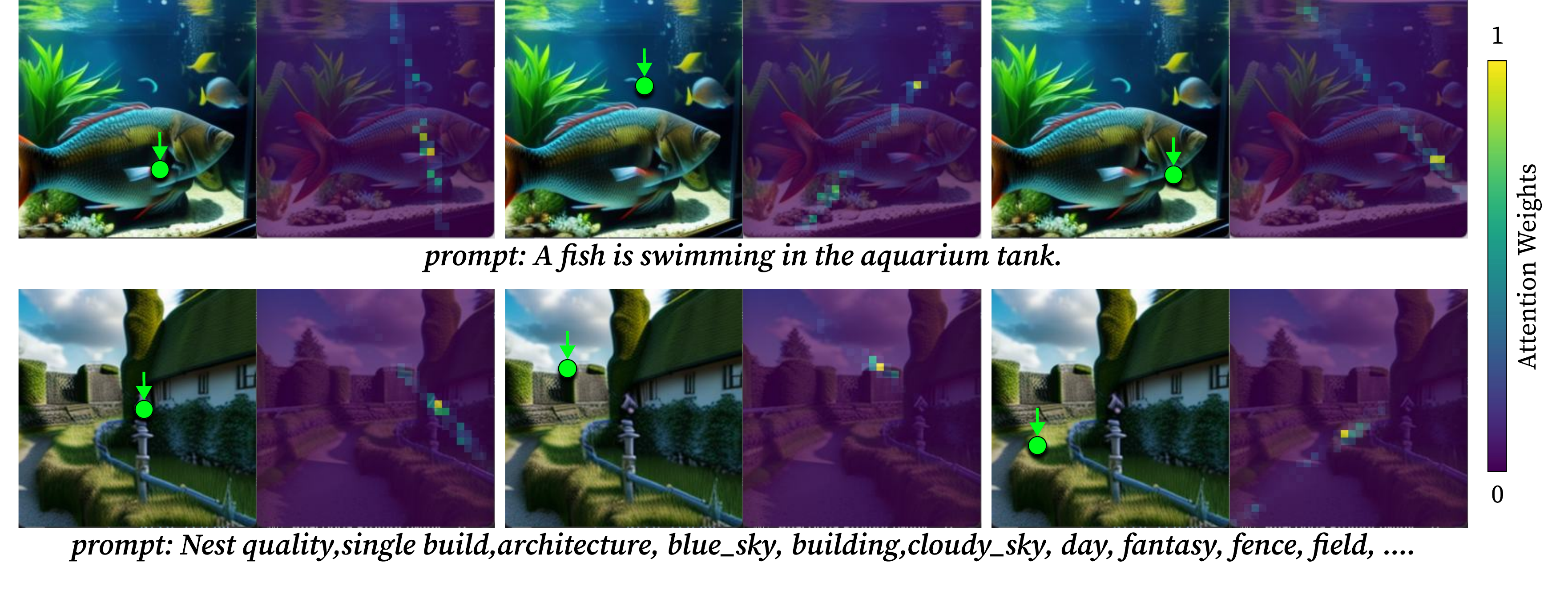}
  \caption{\textbf{Exemplar visualization of epipolar-attention map}.}
  \label{fig:attention_visualization}
\end{figure}

\subsection{Performances with identical camera trajectories}
In Fig.~\ref{fig:more_qualitative_same_trajectory}, we show that our model can generate identical videos if the input camera trajectories are identical, while none of the prior works communicates cross-videos, hence incapable of generating identical contents. Quantitatively, our model reaches an MSE of 0.01, significantly outperforming CameraCtrl at 0.07 and CameraCtrl+SparseCtrl at 0.06.
\begin{figure}[t]
  \centering
  \includegraphics[width=\textwidth]{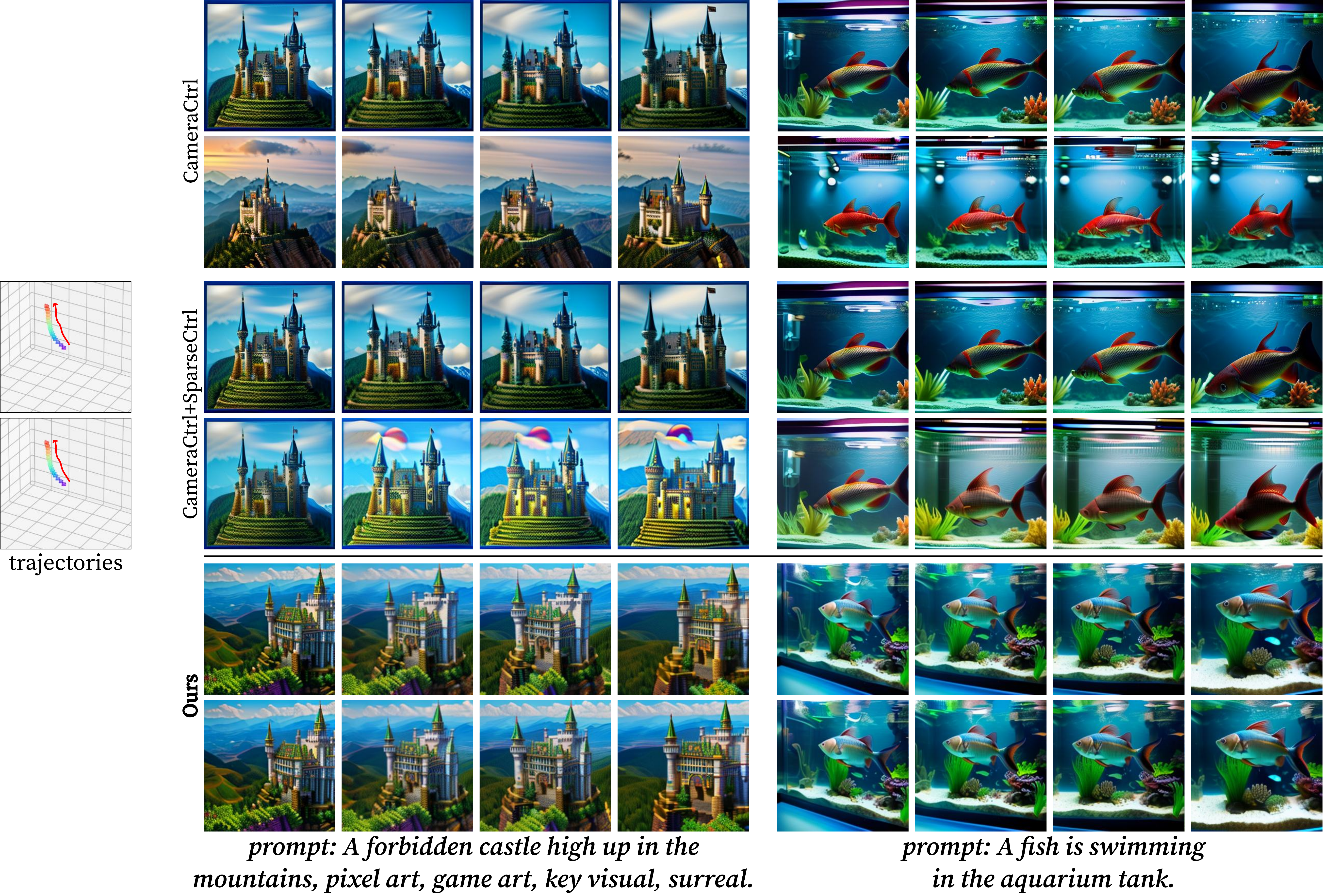}
  \caption{\textbf{Qualitative comparison} with our baselines where the two camera trajectories are identical.}
  \label{fig:more_qualitative_same_trajectory}
\end{figure}

\subsection{Additional results for LoRA fine-tuned models}
Our model exhibits strong plug-and-play properties and can directly generalize to different fine-tuned models, e.g., using Dreambooth~\cite{ruiz2022dreambooth} or LoRA~\cite{hu2021lora}. We show a few rendering results in Fig.~\ref{fig:lora_models}. For better illustration, please refer to our supplemental video.
\begin{figure}[t]
  \centering
  \includegraphics[width=\textwidth]{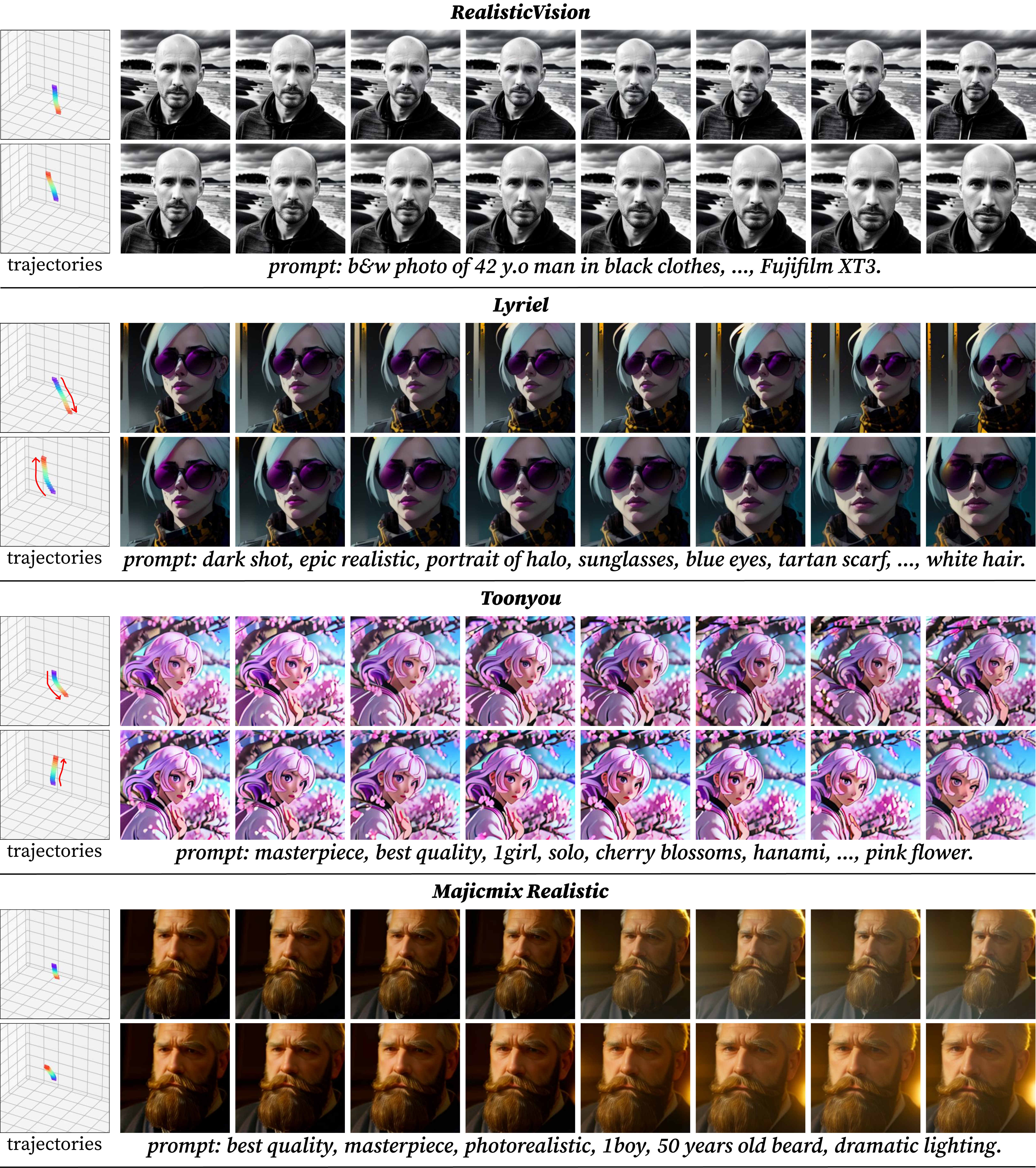}
  \caption{\textbf{Exemplar outputs from Dreambooth~\cite{ruiz2022dreambooth}/LoRA~\cite{hu2021lora} fine-tuned models}.}
  \label{fig:lora_models}
\end{figure}

\subsection{More Qualitative Results}
Figures ~\ref{fig:more_qualitative_fish}, \ref{fig:more_qualitative_cybercity}, \ref{fig:more_qualitative_castle}, \ref{fig:more_qualitative_storm} and \ref{fig:more_qualitative_firework} show more qualitative results, where we generate video pairs with different realizations and camera trajectories for each prompt. Please refer to our supplementary video for better illustrations.
\begin{figure}[t]
  \centering
  \includegraphics[width=\textwidth]{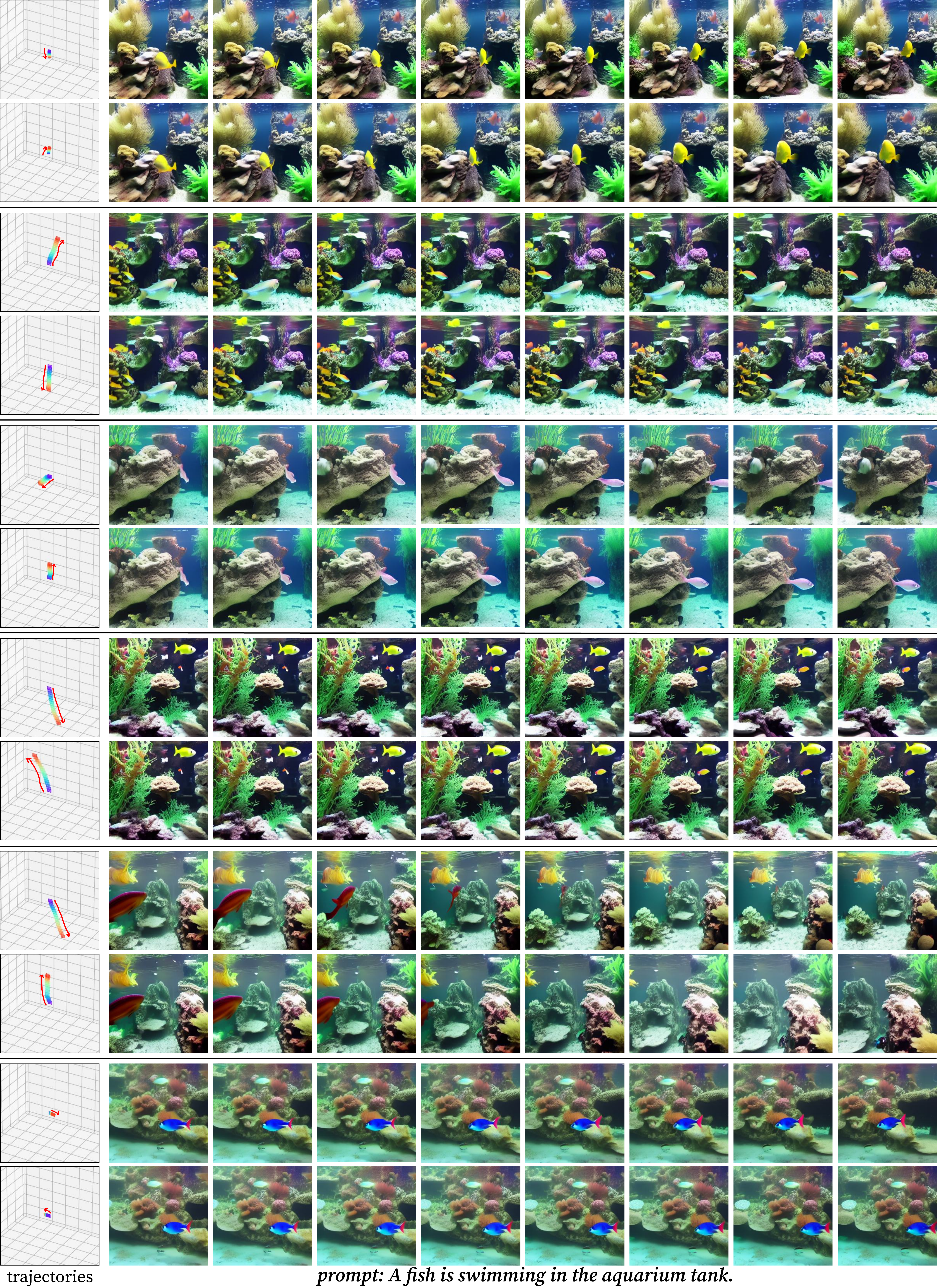}
  \caption{\textbf{Additional Qualitative Results} with different camera trajectories and realizations.}
  \label{fig:more_qualitative_fish}
\end{figure}
\begin{figure}[t]
  \centering
  \includegraphics[width=\textwidth]{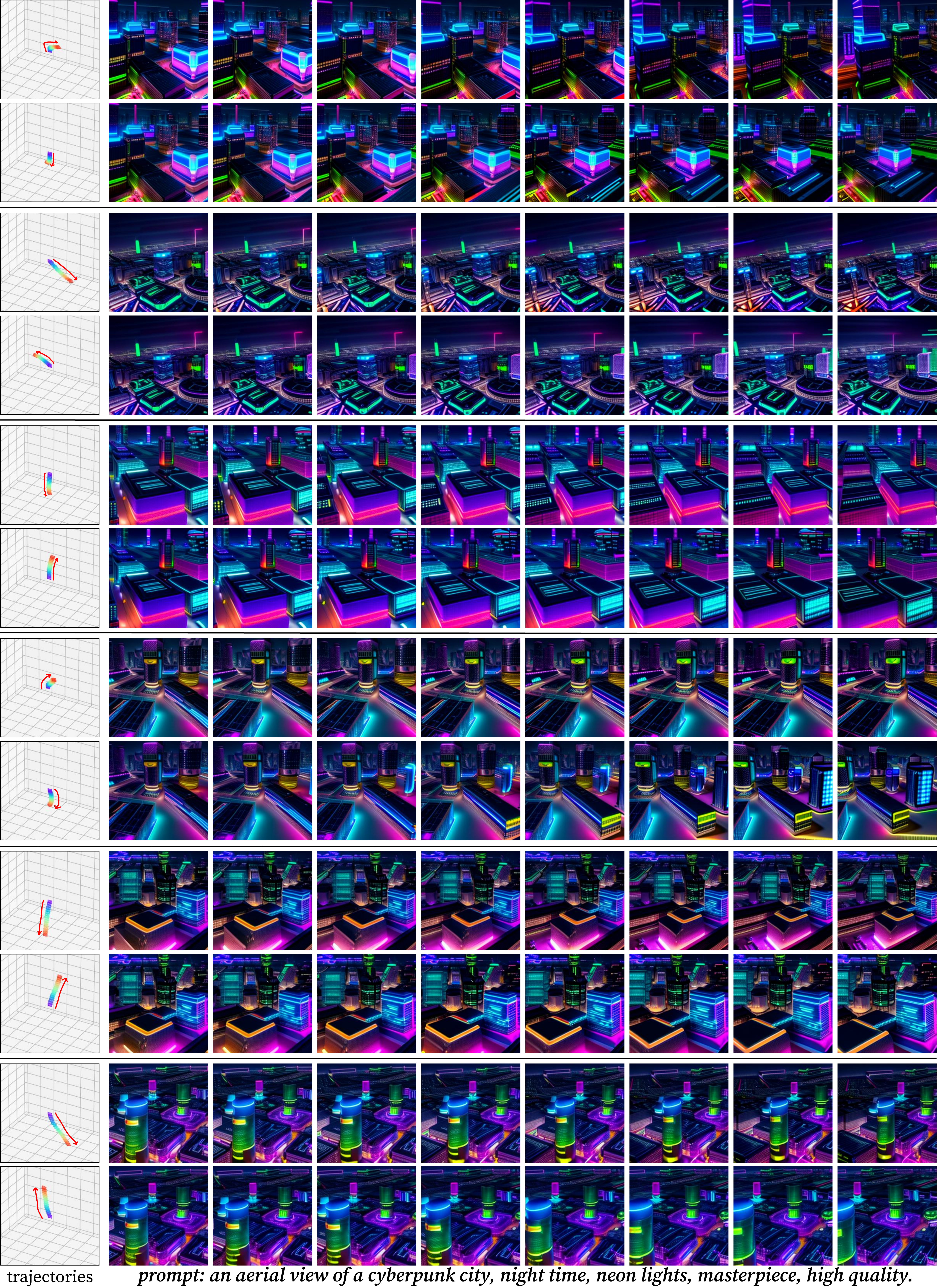}
  \caption{\textbf{Additional Qualitative Results} with different camera trajectories and realizations.}
  \label{fig:more_qualitative_cybercity}
\end{figure}
\begin{figure}[t]
  \centering
  \includegraphics[width=\textwidth]{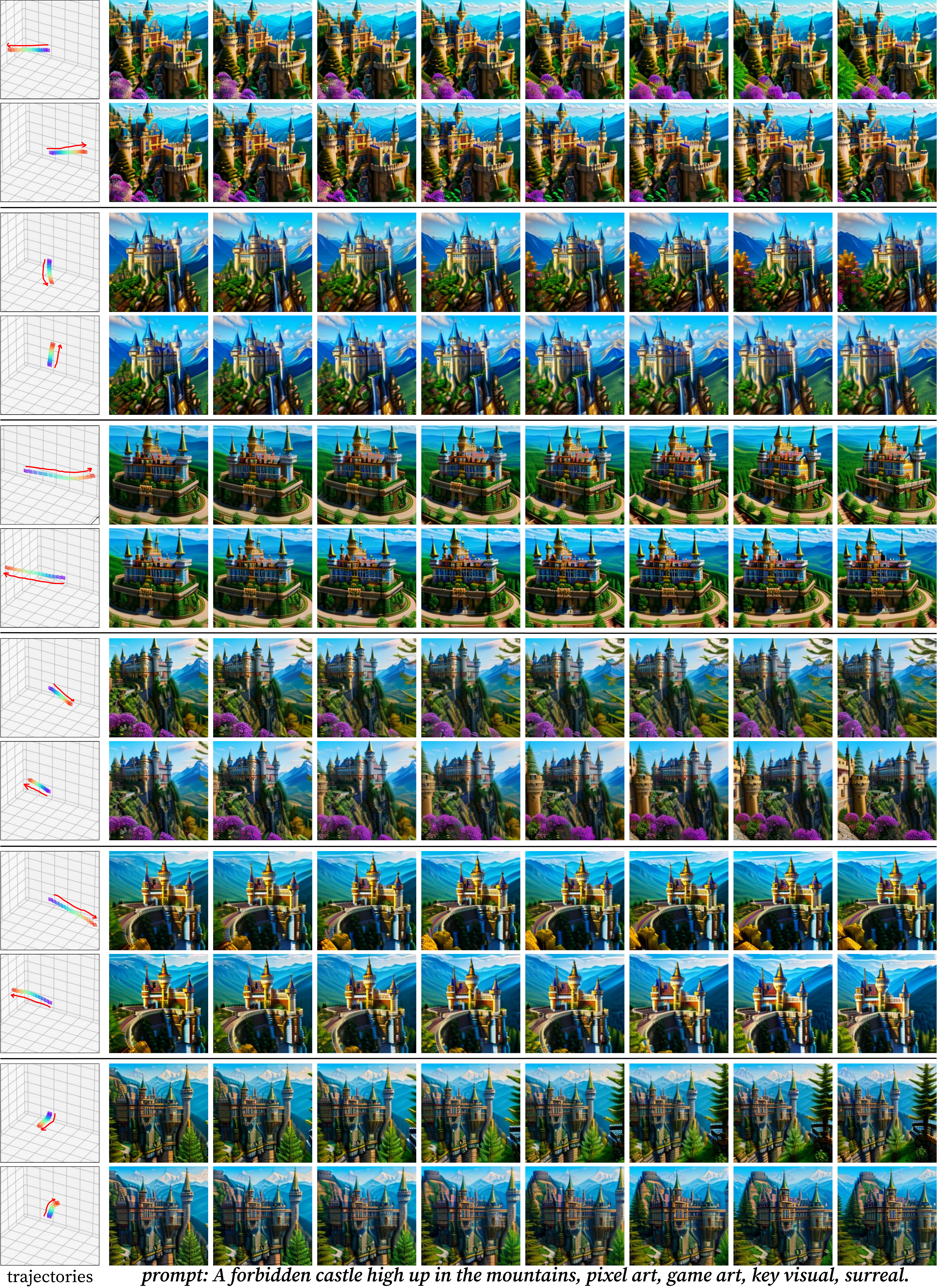}
  \caption{\textbf{Additional Qualitative Results} with different camera trajectories and realizations.}
  \label{fig:more_qualitative_castle}
\end{figure}
\begin{figure}[t]
  \centering
  \includegraphics[width=\textwidth]{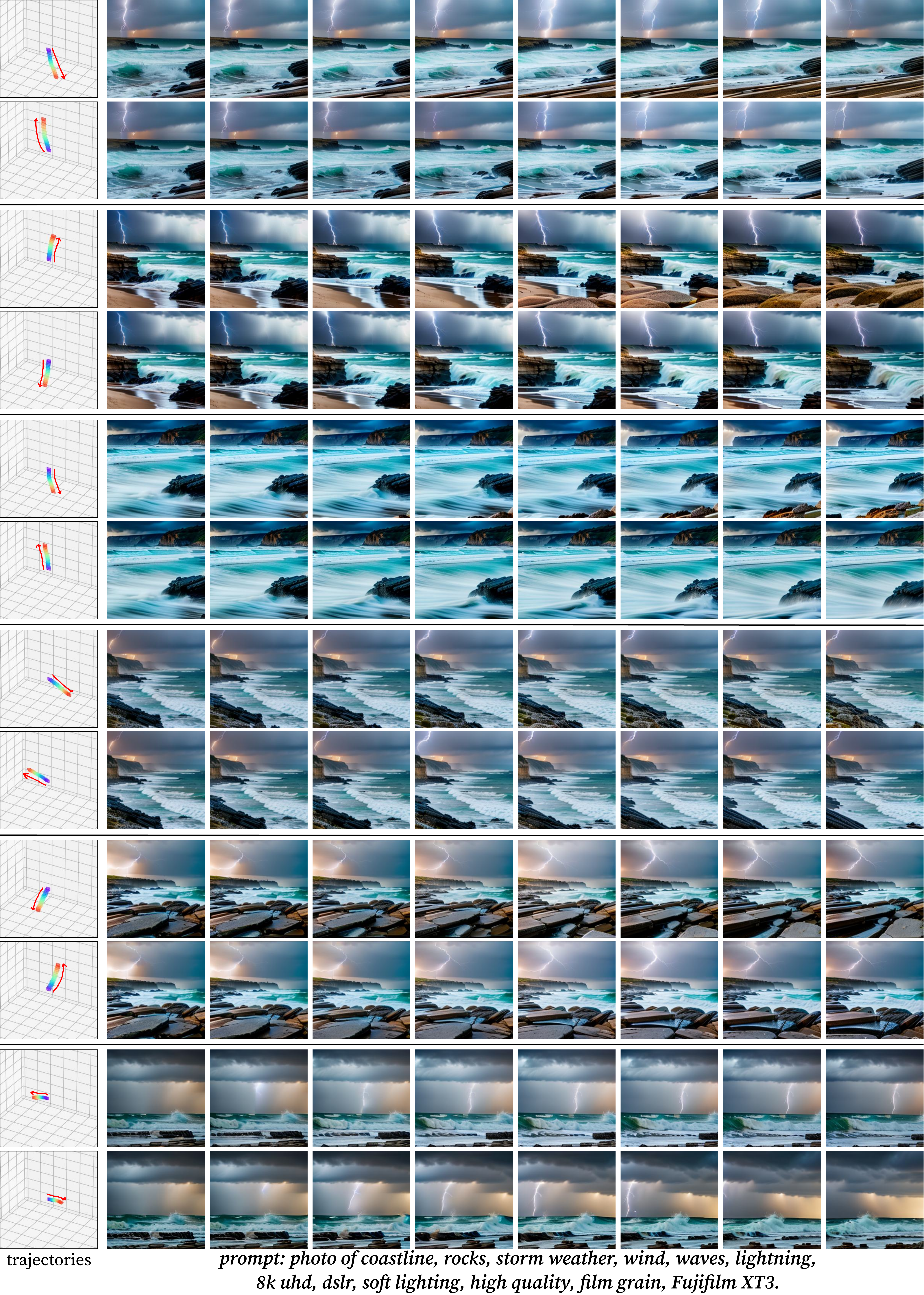}
  \caption{\textbf{Additional Qualitative Results} with different camera trajectories and realizations.}
  \label{fig:more_qualitative_storm}
\end{figure}
\begin{figure}[t]
  \centering
  \includegraphics[width=\textwidth]{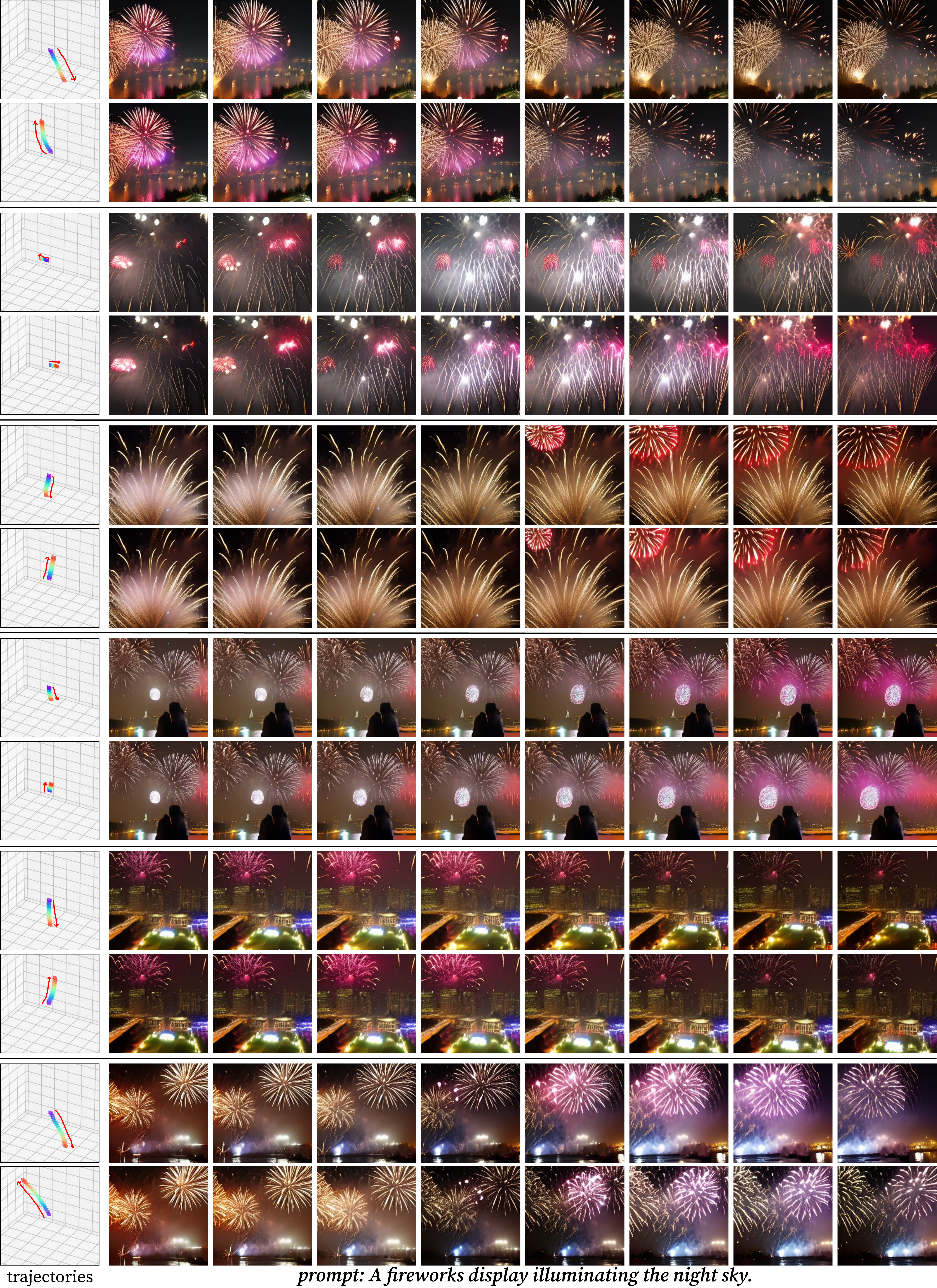}
  \caption{\textbf{Additional Qualitative Results} with different camera trajectories and realizations.}
  \label{fig:more_qualitative_firework}
\end{figure}

\clearpage



\end{document}